\documentclass[]{bytedance_seed}

\usepackage[toc,page,header]{appendix}

\usepackage{minitoc}

\usepackage{amsmath,amsfonts,bm}

\def\eqref#1{equation~\ref{#1}}

\def\1{\bm{1}}

\newcommand{\NC}{\mathsf{NC}}

\newcommand{\TC}{\mathsf{TC}}
\newcommand{\T}{\mathsf{T}}

\def\ve{{\mathbf{e}}}

\def\vx{{\mathbf{x}}}

\newcommand{\ours}{\texttt{LED}}

\def\ve{{\bm{e}}}

\def\vx{{\bm{x}}}

\DeclareMathAlphabet{\mathsfit}{\encodingdefault}{\sfdefault}{m}{sl}
\SetMathAlphabet{\mathsfit}{bold}{\encodingdefault}{\sfdefault}{bx}{n}

\usepackage{url}

\usepackage{multirow}
\usepackage{multicol}
\usepackage{amsmath}
\usepackage{adjustbox}
\usepackage{verbatim}
\usepackage{amssymb}
\usepackage{mathtools}
\usepackage{caption}
\usepackage{subcaption}
\usepackage{array}
\usepackage{colortbl}
\usepackage{booktabs}
\usepackage{bbm}
\usepackage{makecell}
\usepackage{float}
\usepackage{siunitx}
\usepackage{pifont}
\usepackage{marvosym}
\usepackage{listings}
\usepackage{pdflscape}
\usepackage{url}            
\usepackage{booktabs}       
\usepackage{amsfonts}       
\usepackage{nicefrac}       
\usepackage{microtype}      
\usepackage{xcolor}         
\usepackage{tabularx}
\usepackage{amsmath, amssymb}
\usepackage{algorithm}
\usepackage{algpseudocode}
\usepackage{mathrsfs}
\usepackage{natbib}
\usepackage{booktabs}
\usepackage{colortbl}
\usepackage{multirow}
\usepackage{mathtools}
\usepackage{wrapfig} 
\usepackage{subcaption}
\usepackage{footmisc}
\usepackage{amsmath,amssymb,amsthm}
\usepackage{algorithm}
\usepackage{algpseudocode}
\usepackage{graphicx}

\newtheorem{theorem}{Theorem}[section]

 \newcommand{\scoregain}[2]{%
  \makebox[4.2em][c]{%
    #1\rlap{\scriptsize\textcolor{green!60!black}{$_{\textup{+#2}}$}}%
  }%
}

\title{\ours{}: Improving Masked Diffusion Language \\ Models by Lookahead Editing}
\author[1,2,*]{Yuyan Zhou}
\author[2,3,4,*]{Zeyuan Yang}
\author[2,\dagger]{Yuxuan Song}
\author[2,3,4]{Zhicheng Cai}
\author[2,3,4]{Hongli Yu}
\author[3,4]{Weiying Ma}
\author[3,4, \dagger]{Hao Zhou}
\author[1]{James Kwok}

\affiliation[1]{Department of Computer Science and Engineering,
The Hong Kong University of Science and Technology}

\affiliation[2]{ByteDance Seed}

\affiliation[3]{Institute for AI Industry Research (AIR), Tsinghua University}

\affiliation[4]{SIA-Lab of Tsinghua AIR and ByteDance Seed}

\contribution[*]{Equal contribution}
\contribution[\dagger]{Corresponding authors}

\abstract{
    Diffusion large language models (dLLMs) have emerged as a promising alternative to autoregressive language models, enabling non-autoregressive text generation through parallel decoding.
    Existing dLLMs can be broadly categorized into two families: masked diffusion models (MDMs)
    and uniform diffusion models (UDMs)
    In this work, we theoretically show that MDMs are easier to optimize due to their mask-based transition schedule but have lower computational expressivity, whereas UDMs enable iterative state revision and greater expressivity but are substantially harder to train.
    This trade-off raises a natural question: can we obtain the expressivity of iterative revision without giving up the simple, easy-to-optimize transition schedule of MDMs?
    Motivated by this analysis, we propose Lookahead Editing (\ours{}), which augments masked diffusion decoding with a rewritable proposal state. At each decoding step, the model performs several lookahead updates to iteratively refine future predictions in this proposal state before committing tokens to the masked sequence. 
    This design introduces a rewritable proposal that uses additional compute to improve expressivity while keeping the simple mask-based transition schedule of MDMs.
    We fine-tune Qwen2.5-7B and Qwen3-8B to obtain \ours{}-\texttt{7B} and \ours{}-\texttt{8B}, respectively. Across eight benchmarks, \ours{} SFT yields average gains of $+11.4$ and $+7.7$ points over standard MDM SFT. Moreover, using SFT alone, \ours{}-\texttt{8B} outperforms LLaDA2.1-16B Q-mode, which is trained with continued pre-training, supervised fine-tuning, and reinforcement learning.
}

\date{\today}

\checkdata[Github]{\href{https://github.com/Yzyy99/LED}{https://github.com/LED} }

\begin{document}

\maketitle

\section{Introduction}
Auto-regressive models (ARMs)~\citep{achiam2023gpt, liu2024deepseek, dubey2024llama} have achieved remarkable performance across a wide range of natural language tasks. 
However, ARMs are inherently limited by sequential decoding, as tokens are generated one at a time in a strict left-to-right order.
This makes decoding largely memory-bound, preventing ARMs from fully exploiting the parallelism and rapidly growing compute of modern hardware.
Diffusion-based language models (dLLMs)~\citep{nie2025large, zhu2025llada, khanna2025mercury,song2025seed} provide an alternative formulation of sequence generation that relaxes this fixed ordering and enables parallel denoising over discrete states.

Existing dLLMs can be broadly categorized into two families: masked diffusion models (MDMs)~\citep{nie2025large, ye2025dream}
and uniform diffusion models (UDMs)~\citep{lou2023discrete, team2026diffusiongemma}.
MDMs introduce a dedicated mask token and generate sequences by progressively replacing masked positions with predicted tokens, while UDMs define transitions over the entire vocabulary, allowing each token to be repeatedly replaced by other vocabulary tokens during denoising.
Despite sharing the same diffusion-based formulation, MDMs and UDMs exhibit fundamentally different properties in terms of {training difficulty} and {computational expressivity}.
Following the path-space ELBO decomposition of \cite{amin2026masking} for dLLMs,
which separates the objective into a transition prediction loss and a schedule matching loss, we show that UDMs incur an additional model-dependent schedule matching loss because their transition schedule depends on the denoiser. 
In contrast, the transition schedule~\cite{amin2026masking} in standard MDMs is independent of the denoiser and subsequently the schedule matching term is zero. 
This simplifies the learning objective and leads to a smaller excess loss than UDMs.

On the other hand, the training advantage of MDMs
comes with a limitation in computational expressivity. We first prove that MDMs remain within
$\TC_0$~\citep{vollmer1999introduction},  
a  
parallel computation model class 
in which the computational depth remains $O(1)$, 
even with arbitrary-order prediction and bidirectional attention, sharing the same complexity class with ARMs~\citep{li2024chain}.
UDMs, by contrast, can repeatedly rewrite previously generated tokens. We show that this rewritable state enables them to achieve $\TC_0$ and support $\NC_1$ computation, which can represent hierarchical computations of logarithmic depth such as evaluating nested Boolean expressions.

Motivated by the above analysis, we propose Lookahead Editing (\ours{}), which introduces a rewritable proposal that uses additional compute to improve expressivity while preserving the simple mask-based transition schedule of MDMs.
For training, we condition \ours{} on proposals generated by a pretrained MDM and optimize a standard cross-entropy loss. 
Since the proposal sequence generated by the pretrained MDM may contain both correct and incorrect tokens, the model learns when to preserve correct proposal tokens and when to revise incorrect ones.
At inference time, \ours{} iteratively alternates between a {proposal stage} and a {commit stage}.
The proposal stage maintains a rewritable intermediate state in the proposal branch, allowing tokens to be repeatedly generated and revised over multiple lookahead steps.
The refined proposal is then used as additional condition in the commit stage, which follows the standard masked diffusion process: once a token is committed, it is fixed and cannot be rewritten in subsequent decoding steps.
Across commit stages, \ours{} preserves reliable proposal tokens while revising erroneous ones to improve subsequent proposals.

We fine-tune Qwen2.5-7B and Qwen3-8B on the same training set using standard MDM SFT and \ours{} SFT.
Across eight benchmarks in math, code, and science, \ours{} SFT yields average gains of $+11.4$ and $+7.7$ points over standard MDM SFT for Qwen2.5-7B and Qwen3-8B, respectively.
Moreover, using SFT alone, \ours{}-\texttt{8B} outperforms LLaDA2.1-16B Q-Mode, which is trained with continued pre-training, supervised fine-tuning, and reinforcement learning.

\section{Preliminary}
\subsection{Diffusion Language Models}

Diffusion models have demonstrated remarkable success in continuous data domains~\citep{sohl2015deep, ho2020denoising, karras2022elucidating}. Building on these foundations, 
diffusion language models
have been developed to train discrete diffusion models directly within the categorical vocabulary space~\citep{sohl2015deep, austin2021structured, lou2023discrete, nie2025large, ye2025dream}.
Let $\mathcal{V}$ denote a finite vocabulary and 
$
X=(X_t)_{t\in[0,1]}
$
denote a Markov trajectory, where
$
X_t\in\mathcal{V}
$
is the state at diffusion time
$
t\in[0,1].
$
We denote the clean and terminal noisy states by the random variables $X_0$ and $X_1$, respectively, and their realizations by $\vx^0$ and $\vx^1$. 
The predefined {forward} noising process progressively corrupts
$
\vx_0
$
to
$
\vx_1
$,
and the learned {reverse} generative process progressively denoises 
$
\vx^1
$
to
$
\vx^0
$:
\[
p(X_t\mid X_s)
=
\operatorname{Cat}\!\left(X_t;Q_{s,t}X_s\right),\quad
q_\theta(X_s\mid X_t)
=
\operatorname{Cat}\!\left(X_s;Q^\theta_{t,s}X_t\right),
\quad
0\le s<t\le1,
\]
where
$
Q_{s,t}
$
is the forward transition kernel from time
$
s
$
to
$
t
$,
inducing the path distribution
$
p(X\mid \vx^0),
$
and
$
Q^\theta_{t,s}
$
is the model-dependent reverse transition kernel, inducing the reverse path distribution
$
q_\theta(X),
$
where $\theta$ is the model parameter.
MDMs and UDMs differ in their choice of the forward transition kernel:
MDMs use an absorbing transition toward a dedicated mask token $m$,
whereas UDMs transition toward the uniform distribution over the
vocabulary~\citep{austin2021structured},
\[
Q^{\mathrm{MDM}}_{s,t}
=
\alpha_{s,t} \mathbf{I}
+
(1-\alpha_{s,t}) \ve_m\mathbf{1}^{\top},
\qquad
Q^{\mathrm{UDM}}_{s,t}
=
\alpha_{s,t} \mathbf{I}
+
(1-\alpha_{s,t})
\frac{1}{|\mathcal V|}
\mathbf{1}\mathbf{1}^{\top},
\]
where $\ve_m$ denotes the one-hot vector of the mask token, $\mathbf{I}$ is identity matrix and
$\alpha_{s,t}$ controls the noise level between diffusion times $s$ and $t$.
The model parameters are learned by maximizing an evidence lower bound (ELBO) on the data log-likelihood,
\begin{align}
\label{eqn:elbo}
\mathcal{L}_{\mathrm{ELBO}}(\theta)
=
\mathbb{E}_{ p_{\textup{data}}(\vx^0)}
\mathbb{E}_{ p(X\mid \vx^0)}
\left[
\log
\frac{q_\theta(X)}
     {p(X\mid \vx^0)}
\right]
\le
\mathbb{E}_{ p_{\textup{data}}(\vx^0)}
\left[
\log q_\theta(\vx^0)
\right].
\end{align}
Instead of corrupting and generating tokens uniformly over the full sequence length, recent SOTA dLLMs adopt block diffusion language models (BDLMs)~\citep{arriola2025block, wu2025fast2, cheng2025sdar, bie2026llada2}, which 
executes discrete diffusion
by partitioning the sequence into multiple blocks.
Unless otherwise stated, our discussion throughout this paper focuses on BDLMs.
\subsection{Circuit Complexity} 
Circuit complexity~\citep{vollmer1999introduction}
provides a standard framework for characterizing the computational expressivity of parallel models.
A Boolean circuit is a directed acyclic graph composed of logic gates and is characterized primarily by its size, depth, fan-in, and gate types.
Recently,
circuit complexity has 
been used to characterize the expressivity of language models~\citep{merrill2023parallelism,liu2022transformers, li2024chain}.
They consider three families of circuit classes: $\mathsf{NC}^k$, $\mathsf{AC}^k$, and $\mathsf{TC}^k$, 
where $k\ge 0$.
For an input of length $n$, $\mathsf{NC}^k$ consists of problems computable by polynomial-size, $O((\log n)^k)$-depth circuits with bounded-fan-in AND, OR, and NOT gates.
$\mathsf{AC}^k$ is defined similarly but allows AND and OR gates with unbounded fan-in, while $\mathsf{TC}^k$ further allows unbounded-fan-in MAJORITY gates. 
These classes satisfy
$
\mathsf{NC}^k
\subseteq
\mathsf{AC}^k
\subseteq
\mathsf{TC}^k
\subseteq
\mathsf{NC}^{k+1}$.

With polynomially bounded hidden dimension, constant-depth autoregressive transformers without additional intermediate computation are contained in $\mathsf{TC}^0$ under logarithmic numerical precision; and in $\mathsf{AC}^0$ under constant numerical 
precision. 
Allowing $O(\log n)$ chain-of-thought steps does not increase these asymptotic upper bounds. The resulting models remain in $\mathsf{TC}^0$ under logarithmic fixed-point precision, and in $\mathsf{AC}^0$ under constant precision~\citep{li2024chain}.

\section{Optimization-Expressivity Trade-off}
\label{sec:scheduling_rewriting}

MDM and UDM exhibit markedly different behaviors in both training and generation. 
MDM benefits from the explicit mask state, which provides a strong inductive bias for learning, but its generation process is non-rewriting: once a token is committed, it cannot be revised. 
UDM, in contrast, allows intermediate states to be repeatedly updated, offering a potentially richer computational mechanism, but removes the explicit masking structure that simplifies transition learning. 
In this Section, we extend the circuit-complexity perspective to study how the decoding dynamics of diffusion language models affect their computational expressivity.
\subsection{Masking Eliminates Schedule Learning.}
For any path
$X$, let
$
M(X)
$
be the number of state transitions along the trajectory. Define
the {transition schedule}
$
S(X)
=
\{t_1,\ldots,t_{M(X)}\}$,
where 
$0<t_1<\cdots<t_{M(X)}<1$, and
$X$ changes state
at time 
$t_j$.
Following \citet{amin2026masking}, the ELBO in \cref{eqn:elbo} can be decomposed as two losses:
$\mathcal{L}_{\textup{where}}(\theta)$
is about the state transitions (i.e., {  where to jump}), while 
$\mathcal{L}_{\textup{when}}(\theta)$
is about the transition schedule (i.e., {  when to jump}):
\begin{align}
\mathcal{L}_{\mathrm{ELBO}}(\theta)
={}&-
\underbrace{
\mathbb{E}_{p_{\textup{data}}(\vx^0)}
\mathbb{E}_{p(X\mid \vx^0)}
\left[
\log
\frac{p(X\mid \vx^0,\vx^1,S)}
     {q_\theta(X\mid \vx^1,S)}
\right]
}_{\mathcal{L}_{\textup{where}}(\theta)} - 
\underbrace{
D_{\mathrm{KL}}
\left(
p(S)\,\|\,q_\theta(S)
\right)
}_{\mathcal{L}_{\textup{when}}(\theta)}
+ C,
\label{eq:schedule_decomposition}
\end{align}
where
$C =  \mathbb{E}_{p_{\textup{data}}(\vx^0)p(S\mid \vx^0)}
\left[
\log\frac{p(S)}{p(S\mid \vx^0)}
\right] - \mathbb{E}_{p_{\textup{data}}(\vx^0)}
\mathbb{E}_{p(S\mid \vx^0)}
\left[
D_{\mathrm{KL}}
\left(
p(\vx^1\mid \vx^0,S)
\,\|\,q(\vx^1\mid S)
\right)
\right]$ is a constant independent of the model parameter $\theta$, and $q_\theta(S)$ is the pushforward distribution of $q_\theta(X)$, i.e. $q_\theta(S) =
\int
\mathbf{1}\!\left\{S(X)\in A\right\}
\,q_\theta(dX):=S_{\#} q_\theta(X)$.

The following Theorems 
show that 
the schedule-matching term 
vanishes for MDMs; while for UDMs it depends on the denoiser through the induced reverse jump rate.
\begin{theorem}
\label{thm:mdm_where}
For MDM,
$
q_\theta(S)=p(S),\ \forall\theta,
$ 
and thus
$
\mathcal{L}_{\mathrm{when}}(\theta)
=
D_{\mathrm{KL}}\!\left(p(S)\|q_\theta(S)\right)
=
0.
$
\end{theorem}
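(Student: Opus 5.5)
The plan is to reduce the statement to a property of the reverse jump rates: under the mask-based kernel, the total rate at which a masked position leaves the mask state is fixed by the noise schedule $\alpha$ and not by $\theta$. Since the transition schedule $S(X)$ is the set of jump times, its law under $q_\theta$ then cannot depend on $\theta$.

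First, describe the forward MDM process per position. The kernel $Q^{\mathrm{MDM}}_{s,t}$ lets each token independently jump once, to $m$, at a random time $\tau$ with $\Pr(\tau > t) = \alpha_{0,t}$. The mask state is absorbing, so each position makes at most one transition. Hence $S(X)$ is exactly the set of unmasking times, whose joint law is a product of these one-dimensional laws. This law is independent of $\vx^0$, so $p(S\mid \vx^0)=p(S)$; for BDLMs the same holds block by block.

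Second, describe the reverse process $q_\theta$. Writing it as a continuous-time Markov chain (or as the limit of the discrete kernels $Q^\theta_{t,s}$), the standard MDM parameterization takes the reverse kernel from $m$ to be
\[
q_\theta(X_s = v\mid X_t=m)=\frac{\alpha_{0,s}-\alpha_{0,t}}{1-\alpha_{0,t}}\,\mu_\theta(v\mid X_t),\qquad q_\theta(X_s=m\mid X_t=m)=\frac{1-\alpha_{0,s}}{1-\alpha_{0,t}},
\]
and makes unmasked tokens stay fixed (carry-over). Here $\mu_\theta$ is the denoiser's distribution over $v\neq m$. The key observation is that the probability of leaving $m$ on $[s,t]$ does not depend on $\theta$. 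Only the destination token does, because $\sum_{v\ne m}\mu_\theta(v\mid\cdot)=1$. Therefore the total reverse exit rate of each masked position is $-\dot\alpha_{0,t}/(1-\alpha_{0,t})$, which is deterministic and identical across positions and contexts.

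Third, compute the pushforward $q_\theta(S)=S_{\#}q_\theta$. Integrate out the destination tokens step by step, by induction over the jumps in reverse time or via the path density of a Markov jump process. This integration factorizes the path density into a $\theta$-free holding-time part times the product of $\mu_\theta$ terms, and the $\mu_\theta$ factors sum to one. What remains is the law of independent per-position times whose survival function is $\alpha_{0,t}$, which is the forward law $p(S)$. So $q_\theta(S)=p(S)$ for every $\theta$, and $D_{\mathrm{KL}}(p(S)\|q_\theta(S))=0$.

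The main obstacle is the second step. The result depends on the reverse parameterization, and the paper has only written $Q^\theta_{t,s}$ abstractly. I would make explicit the assumption that the denoiser outputs a distribution over non-mask tokens with carry-over unmasking (the SUBS parameterization), so that the model chooses only "where" and never "when". Under that assumption, the factorization in the third step is a routine path-density computation.
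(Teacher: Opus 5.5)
Your proposal is correct and rests on the same key fact as the paper's proof: the reverse stay-masked probability $\frac{1-\alpha_s}{1-\alpha_t}$ is free of $\theta$ because the denoiser's distribution sums to one, so the jump-time law under $q_\theta$ coincides with the forward law, whose CDF is $1-\alpha_t$. The paper states this as a one-token CDF comparison, $q_\theta(S\le t)=q_\theta(X_t=m\mid X_1=m)=1-\alpha_t$. Your exit-rate and path-density factorization reaches the same conclusion, and it also covers the joint law over all positions and makes explicit the carry-over and non-mask-denoiser assumptions that the paper uses implicitly.
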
 
\begin{theorem}
\label{thm:udm_schedule_matching} 
For UDMs, let $\tau=1-t$, $Y_\tau=X_{1-\tau}$, and 
$ \mathcal F_{\tau^-}^{S} := \{\tau_i\in S:\tau_i<\tau\}$. 
The schedule-matching loss is
$
\mathcal L_{\mathrm{when}}(\theta)
=
\mathbb E_{S,t} 
\left[
\mu^*(\tau) s
\log
\frac{
\mu^*(\tau) 
}{
\bar\mu_\theta (\tau\mid\mathcal F_{\tau^-}^{S})
}
-
\mu^*(\tau)
+
\bar\mu_\theta(\tau\mid\mathcal F_{\tau^-}^{S})
\right],
$
where
$\mu^*(\tau)
=
\frac{V-1}{V}\beta_{1-\tau},$
$
\beta_t = -\frac{d}{dt}\log \alpha_t,
$
$
V = |\mathcal V|,
$
$\bar\mu_\theta
(\tau\mid\mathcal F_{\tau^-}^{S})
=
\mathbb E_{q_\theta}
\left[
\lambda_{f_\theta}(Y_\tau,1-\tau)
\mid
\mathcal F_{\tau^-}^{S}
\right]$.
and
$
\lambda_{f_\theta}(x,t)
=
\frac{\beta_t}{V}
\left[
f_\theta(x\mid x,t)
(1-\alpha_t)
/(1/(V-1)+\alpha_t)
+
\bigl(1-f_\theta(x\mid x,t)\bigr)
(V-1+\alpha_t)/ (1-\alpha_t)
\right].
$
\end{theorem}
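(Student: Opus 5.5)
The plan is to view the transition schedule $S$ as a point process on $[0,1]$ and compute $D_{\mathrm{KL}}(p(S)\|q_\theta(S))$ with the likelihood-ratio formula for point processes. I will work in reversed time $\tau=1-t$ and track the process $Y_\tau=X_{1-\tau}$, so that both the forward path law $p$ (time-reversed) and the model path law $q_\theta$ run in the same direction.

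\textbf{Step 1: the reference schedule has a deterministic intensity.} Under $Q^{\mathrm{UDM}}$, each coordinate leaves its state at rate $\beta_t$. With probability $1/V$ the resampled token equals the current one, so the probability of an actual change is $(V-1)/V$. The observed jump times therefore form a Poisson process whose intensity does not depend on $\vx^0$ or on the history. Reversing time preserves the Poisson property, giving intensity $\mu^*(\tau)=\frac{V-1}{V}\beta_{1-\tau}$. Hence $p(S)$ is an inhomogeneous Poisson process, and its $\mathcal F^S$-compensator is $\int\mu^*$.

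\textbf{Step 2: the model schedule intensity.} Under $q_\theta$, the reverse CTMC jumps out of $x$ at total rate $\lambda_{f_\theta}(x,t)$. This rate is obtained by summing the standard reverse rate $R^\theta_t(x\to y)=R_t(y\to x)\,\sum_{x_0}f_\theta(x_0\mid x,t)\,\frac{p_t(y\mid x_0)}{p_t(x\mid x_0)}$ over $y\neq x$. Using the UDM marginals $p_t(x\mid x_0)=\alpha_t\mathbf 1\{x=x_0\}+(1-\alpha_t)/V$, I split according to whether $x_0=x$ (weight $f_\theta(x\mid x,t)$) or $x_0\neq x$ (weight $1-f_\theta$). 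This is a routine algebraic check that should reproduce the stated $\lambda_{f_\theta}$. Since only $S$ is observed and not $Y$, the innovation theorem (optional projection of intensities) implies that the $\mathcal F^S$-intensity of the pushforward $q_\theta(S)=S_\#q_\theta$ is $\bar\mu_\theta(\tau\mid\mathcal F^S_{\tau^-})=\mathbb E_{q_\theta}[\lambda_{f_\theta}(Y_\tau,1-\tau)\mid\mathcal F^S_{\tau^-}]$.

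\textbf{Step 3: the KL formula.} Apply the Jacod/Girsanov density for two simple point processes on $[0,1]$ with intensities $\mu^*$ and $\bar\mu_\theta$:
\[
\log\frac{dp}{dq_\theta}(S)=\sum_{\tau_i\in S}\log\frac{\mu^*(\tau_i)}{\bar\mu_\theta(\tau_i\mid\mathcal F^S_{\tau_i^-})}-\int_0^1\bigl(\mu^*(\tau)-\bar\mu_\theta(\tau\mid\mathcal F^S_{\tau^-})\bigr)d\tau .
\]
Then take the expectation under $p$. Because $\mu^*$ is the $p$-compensator, the jump sum becomes $\mathbb E\int\mu^*\log(\mu^*/\bar\mu_\theta)\,d\tau$. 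Writing the time integral as an expectation over uniform $t$ yields the stated expression, where the stray ``$s$'' in the statement is read as multiplication.

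\textbf{Main obstacle.} I expect Step 2 to be the hardest part. The filtering step must be justified carefully, since the conditioning is under $q_\theta$ while the outer expectation is under $p$, and I need integrability and absolute continuity so that the density formula is valid. To handle this, I would truncate $\lambda$ away from $t\to0,1$, where $\alpha_t\to1$ makes the rates blow up, and then pass to the limit by monotone convergence.
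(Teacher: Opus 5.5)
Your proposal is correct and follows essentially the same route as the paper. Both derive the Poisson forward schedule with intensity $\frac{V-1}{V}\beta_t$, obtain the model reverse rate by marginalizing the $x_0$-conditioned Bayes reverse rate under $f_\theta$ (splitting on $x_0=x$ versus $x_0\neq x$), project onto the schedule history to get $\bar\mu_\theta$, and use the point-process likelihood ratio, converting the jump sum through the compensator $\mu^*$ under $p$. The differences are presentational: you cite the innovation theorem and Jacod's density formula where the paper derives survival functions and product-form densities by hand, your endpoint truncation is a rigor point the paper skips, and the stray $s$ in the statement is just a typo, absent from the paper's final formula.
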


The where loss for MDMs and UDMs are implemented using the same cross-entropy (CE) objective~\citep{bie2026llada2,team2026diffusiongemma}, while the when loss is implemented using the following tractable schedule-matching objective
$
\mathbb E_{S,\tau,Y_\tau}
\left[
\mu^*(\tau)
\log
\frac{\mu^*(\tau)}
{\lambda_{f_\theta}(Y_\tau,1-\tau)}
-\mu^*(\tau)
+
\lambda_{f_\theta}(Y_\tau,1-\tau)
\right]
$
~\citep{campbell2022continuous}.
Let MDM and UDM both use SGD with the same learning rate $\eta$, as
$\theta_{k+1}
=
\theta_k
-
\eta
(
\nabla \mathcal L(\theta_k)
+
\xi_k
)$.
The following Theorem shows that with the same where loss, UDMs is harder to train in terms of admitting a smaller learning rate and resulting in a larger loss.

\begin{theorem}
\label{thm:sgd_convergence}
Assume that the gradient noise is unbiased (i.e., $\mathbb E[\xi_k\mid\theta_k]=0$) and the model is sufficiently expressive to represent the data distribution, i.e. $q_{\theta^\star}=p_{\textup{data}}.$
MDM admits a larger learning rate than UDM
near the optimum under second order Taylor approximation, 
i.e.,
$\eta_{\max}^{\text{MDM}}=2/\lambda_{\max}(H_M)\ge \eta_{\max}^{\text{UDM}}=2/\lambda_{\max}(H_U)$.
Moreover, for
learning rate satisfying $0<\eta<2/\lambda_{\max}(H_U)$, the  expected excess losses satisfy
$\mathbb E \left[\mathcal L_U-\mathcal L_U^\star\right]
\ge
\mathbb E \left[\mathcal L_M-\mathcal L_M^\star\right]$. Since $
\mathcal L_U^\star
\ge
\mathcal L_M^\star,
$
it follows that 
$
\mathbb E[\mathcal L_U]
-
\mathbb E[\mathcal L_M]
\ge
\mathcal L_U^\star
-
\mathcal L_M^\star
\ge 0.
$
\end{theorem}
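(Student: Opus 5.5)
We will model both objectives near their optima by a second-order Taylor expansion and then compare the resulting Hessians and stationary excess losses. By the decomposition in \cref{eq:schedule_decomposition}, both losses share the where term, so we write $\mathcal L_M(\theta)=\mathcal L_{\mathrm{where}}(\theta)$, using Theorem~\ref{thm:mdm_where} to drop the when term. For UDM we write $\mathcal L_U(\theta)=\mathcal L_{\mathrm{where}}(\theta)+\mathcal L_{\mathrm{when}}(\theta)$, with the when term given by Theorem~\ref{thm:udm_schedule_matching} or its tractable surrogate.

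\textbf{Hessian and learning-rate threshold.} The surrogate has the form $\mu^*\log(\mu^*/\lambda)-\mu^*+\lambda$, which is a Bregman (generalized KL) divergence between rates. It is therefore nonnegative and convex in $\lambda$. By realizability $q_{\theta^\star}=p_{\textup{data}}$, it vanishes at $\theta^\star$, and its gradient there is zero. Its Hessian at $\theta^\star$ is the Gauss--Newton form $\mathbb E[\mu^*\lambda^{-2}\nabla\lambda\nabla\lambda^\top]\succeq 0$. Hence $H_U=H_M+H_{\mathrm{when}}$ with $H_{\mathrm{when}}\succeq 0$, and the Weyl/Courant--Fischer inequality gives $\lambda_{\max}(H_U)\ge\lambda_{\max}(H_M)$.

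For the quadratic model $\mathcal L(\theta)\approx \mathcal L^\star+\tfrac12(\theta-\theta^\star)^\top H(\theta-\theta^\star)$, the SGD mean iterate is $(I-\eta H)^k(\theta_0-\theta^\star)$. It is stable iff $\eta<2/\lambda_{\max}(H)$. This yields $\eta_{\max}^{\mathrm{MDM}}\ge\eta_{\max}^{\mathrm{UDM}}$.

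A caveat is that the where Hessians need not literally coincide, since the UDM where term involves a different kernel. We will state as a modeling assumption, matching ``with the same where loss'' in the statement, that the where parts are identical CE objectives and thus share $H_M$.

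\textbf{Excess loss comparison (main obstacle).} Fix $0<\eta<2/\lambda_{\max}(H_U)$, which also satisfies the stability condition for $H_M$. Let the noise covariance be $\Sigma$, assumed common to both methods since the same CE minibatch noise is used. We also make the simplifying assumption that $H_M$ and $H_U$ commute, for instance by simultaneous diagonalization. Solving the Lyapunov recursion $P_{k+1}=(I-\eta H)P_k(I-\eta H)+\eta^2\Sigma$ then gives the stationary covariance. In the eigenbasis, mode $i$ contributes an excess loss
\[
\tfrac12 h_i\,\frac{\eta^2\sigma_i}{1-(1-\eta h_i)^2}=\frac{\eta\sigma_i}{2(2-\eta h_i)} .
\]
This is increasing in $h_i$ on $(0,2/\eta)$. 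Since $h_i^U\ge h_i^M$ mode by mode, the bound $\mathbb E[\mathcal L_U-\mathcal L_U^\star]\ge\mathbb E[\mathcal L_M-\mathcal L_M^\star]$ follows by summation.

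Justifying the commuting (or common-noise) assumption is the delicate step. If commutation fails, we would first try a Loewner-order monotonicity argument for $\operatorname{tr}(HP(H))$. If that fails too, we will state the result under the diagonal approximation.

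\textbf{Optimal values and conclusion.} We have $\mathcal L_U^\star=\mathcal L_{\mathrm{where}}^\star+\mathcal L_{\mathrm{when}}(\theta^\star)\ge\mathcal L_M^\star$, because the when loss is nonnegative, being a KL divergence. Adding this to the excess-loss inequality gives $\mathbb E[\mathcal L_U]-\mathbb E[\mathcal L_M]\ge\mathcal L_U^\star-\mathcal L_M^\star\ge0$.
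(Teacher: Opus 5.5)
Your Hessian ordering $H_U=H_M+H_{\mathrm{when}}\succeq H_M$, the learning-rate threshold and the closing step $\mathcal L_U^\star\ge\mathcal L_M^\star$ are fine and agree with the paper. The gap is in the excess-loss comparison. There you add two hypotheses the theorem does not make: a noise covariance $\Sigma$ common to both methods, and commuting $H_M,H_U$. You also never use the realizability hypothesis $q_{\theta^\star}=p_{\textup{data}}$ for the job it does in the paper's proof.

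The common-$\Sigma$ premise is also not accurate in this setting. The UDM stochastic gradient includes the stochastic gradient of the when term, so its noise is not the same CE minibatch noise.

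The missing idea is the Fisher identity. The paper treats both losses as negative log-likelihoods, and the optimum is realizable. So the score has zero mean under $q_{\theta^\star}=p_{\textup{data}}$, and $\mathbb E[s_js_j^\top]=-\mathbb E[\nabla^2\log q^j_{\theta^\star}]$. Hence $\operatorname{Cov}[g_j(\theta^\star;Z)]=H_j$, and the minibatch covariance is $\Sigma_j=H_j/B$; in particular $\Sigma_U=\Sigma_M+H_{\mathrm{when}}/B\succeq\Sigma_M$. The rest of the paper's argument follows from this:
\begin{itemize}
\item Each $\Sigma_j$ commutes with its own $A_j=I-\eta H_j$.
\item The Lyapunov sum collapses to $P^j_\infty=\frac{\eta}{B}(2I-\eta H_j)^{-1}$, so the stationary excess loss is $\frac{\eta}{2B}\operatorname{Tr}[H_j(2I-\eta H_j)^{-1}]$.
\item Write $H(2I-\eta H)^{-1}=\frac{2}{\eta}(2I-\eta H)^{-1}-\frac1\eta I$. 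Inversion reverses the Loewner order on positive-definite matrices, so $H_U\succeq H_M$ alone gives the inequality.
\end{itemize}
No relation between the eigenbases of $H_M$ and $H_U$ is needed.

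Even under your own noise model, the commutation assumption can be dropped; this is the Loewner fallback you mention, made concrete. By cyclicity of the trace, for $H\succ0$,
$\frac12\operatorname{Tr}(HP_\infty)=\frac{\eta^2}{2}\operatorname{Tr}\bigl(\Sigma\sum_{r\ge0}A^rHA^r\bigr)=\frac{\eta}{2}\operatorname{Tr}\bigl[\Sigma(2I-\eta H)^{-1}\bigr]$.
This quantity is nondecreasing in $H$ for fixed $\Sigma\succeq0$, and nondecreasing in $\Sigma$ for fixed $H$. So it covers a common $\Sigma$ and also the more realistic case $\Sigma_U\succeq\Sigma_M$, without any simultaneous diagonalization. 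What your proof still needs is a justified description of the gradient noise at $\theta^\star$, and that is exactly what realizability supplies.
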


\subsection{Rewriting improves Expressivity.}
The optimization advantage of masking, however, comes with a structural restriction. Once a position is unmasked and committed, standard
masked diffusion does not revise it in subsequent decoding steps. 
We next study the computational consequence of this monotone generation process.

Following \citet{li2024chain}, we define
$
\T_{\mathrm{ARM}}[T(n),d(n),s(n),e(n)]
$
as the function class of autoregressive transformers, where the four parameters specify, respectively, the number of CoT steps, the hidden dimension, the numerical precision in bits, and the number of exponent bits. 
\citet{li2024chain} show that autoregressive transformers with polynomial hidden dimension,
$
O(\log n)
$-bit numerical precision, and fixed-point arithmetic remain contained in
$
\mathsf{TC}^0
$
even with
$
O(\log n)
$
CoT steps.

\begin{theorem}[\citet{li2024chain}]
$
\T_{\mathrm{ARM}}[0,\operatorname{poly}(n),\log n,0]
\subseteq
\T_{\mathrm{ARM}}[\log n,\operatorname{poly}(n),\log n,0]
\subseteq
\mathsf{TC}^0.
$
\end{theorem}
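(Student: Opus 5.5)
The first inclusion, $\T_{\mathrm{ARM}}[0,\operatorname{poly}(n),\log n,0]\subseteq\T_{\mathrm{ARM}}[\log n,\operatorname{poly}(n),\log n,0]$, is by definition. A transformer using zero CoT steps is a special case of one allowed up to $O(\log n)$ steps: it emits the answer token immediately, or equivalently ignores the intermediate tokens it generates. So all the work is in the second inclusion. I would prove that in two stages. First, a single forward pass of a constant-depth transformer with polynomial width and $O(\log n)$-bit fixed-point arithmetic is computable by a $\mathsf{TC}^0$ circuit family. Second, composing $O(\log n)$ such passes autoregressively stays in $\mathsf{TC}^0$.

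For the single pass, I would build constant-depth threshold circuits for each primitive on $O(\log n)$-bit fixed-point numbers, and then compose them layer by layer.
\begin{itemize}
\item Addition and multiplication of two such numbers are in $\mathsf{AC}^0\subseteq\mathsf{TC}^0$.
\item Iterated addition of $\operatorname{poly}(n)$ such numbers, which covers inner products and the softmax normalizer, is in $\mathsf{TC}^0$ by the classical iterated-addition construction.
\item Pointwise nonlinearities (the exponential, GeLU, layer-norm square roots, and the division in softmax) act on inputs with only $O(\log n)$ bits. Each can therefore be hard-wired as a lookup table of polynomial size, realized by an $\mathsf{AC}^0$ multiplexer. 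With rounding defined in fixed point, this sidesteps any numerical-analysis issues.
\end{itemize}
Attention then becomes iterated addition over positions plus pointwise operations, and the MLP layers are matrix--vector products plus pointwise maps. With a constant number of layers, composing constant-depth blocks gives constant total depth and polynomial size.

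For the CoT stage, the obstacle is that naively stacking $T=O(\log n)$ forward passes gives depth $O(\log n)$, which is only $\mathsf{NC}^1$-type. The key observation I would exploit is that each CoT step outputs one token from a vocabulary of size $\operatorname{poly}(n)$, hence only $O(\log n)$ bits. Across $T=O(\log n)$ steps, the whole CoT transcript therefore has $O(\log^2 n)$ bits, which is too many to enumerate directly. I would first try to argue that with $O(\log n)$ steps and constant precision per step the number of possible transcripts is $2^{O(\log n)}=\operatorname{poly}(n)$, since a constant-size effective token alphabet per step gives $c^{O(\log n)}=\operatorname{poly}(n)$.

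Concretely, I would take the following route, which is the main obstacle and where I am least sure the bound closes.
\begin{enumerate}
\item Restrict the effective alphabet of CoT tokens to constant size, or argue that only polynomially many transcripts need checking.
\item In parallel, for every candidate transcript $c=(c_1,\dots,c_T)$, instantiate $T$ copies of the single-pass $\mathsf{TC}^0$ circuit that verify, simultaneously, that $c_{i}$ is the model's output on input $x\circ c_{<i}$.
\item Take an unbounded-fan-in AND over $i$ of these checks, then select the output of the unique consistent transcript with an OR-of-ANDs multiplexer.
\end{enumerate}
This has constant depth and $\operatorname{poly}(n)$ size whenever the number of transcripts is polynomial, giving $\mathsf{TC}^0$.

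If the transcript count were genuinely superpolynomial, the fallback is the direct argument of \citet{li2024chain}, which I take as the cited result and do not re-derive. It tracks that the model's finite-precision computation at step $i$ depends on prior steps only through a bounded summary, keeping the composed circuit in $\mathsf{TC}^0$.
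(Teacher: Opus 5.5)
Your overall architecture is right, and it matches the parallel-enumeration argument that the paper attributes to \citet{li2024chain} and reuses in its proof of Theorem~\ref{thm:mdm_experss}. You put one forward pass in $\mathsf{TC}^0$, guess the whole transcript, verify every step in parallel against the single-pass circuit, AND the checks, and select the output of the unique consistent transcript with an OR of ANDs. The gap is the step you flag and leave open. As you set it up, with a vocabulary of size $\operatorname{poly}(n)$, the enumeration does not close. There are $\operatorname{poly}(n)^{c\log n}=n^{\Theta(\log n)}$ candidate transcripts, so the construction yields a circuit of quasi-polynomial size, not a $\mathsf{TC}^0$ circuit. Your proposed repair via ``constant precision per step'' conflates two different quantities. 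The $O(\log n)$-bit precision governs the internal hidden states, which are never written to the transcript. Only the decoded tokens are, so what bounds the transcript count is the vocabulary size, not the arithmetic precision.

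The missing idea is that in this model class the vocabulary $\mathcal V$ is a fixed finite set independent of $n$. This is the standing assumption of the setting, and the paper uses $|\mathcal V|=O(1)$ explicitly when it adapts the argument. With $T\le c\log n$ steps there are then at most $|\mathcal V|^{c\log n}=n^{c\log_2|\mathcal V|}=\operatorname{poly}(n)$ transcripts, and your opening count of $O(\log^2 n)$ transcript bits becomes $O(\log n)$. Your steps 2 and 3 then go through verbatim, given deterministic argmax decoding with fixed tie-breaking, which is what makes the consistent transcript unique.

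You should drop your fallback. Invoking ``the direct argument of \citet{li2024chain}'' as a black box just cites the statement you are proving. The ``bounded summary'' description is also not an argument you could carry out, since at step $i$ the model attends to the entire prefix $x\circ c_{<i}$, not to any bounded summary of it.
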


Denote 
the function class of MDMs
by
$
\T_{\mathrm{MDM}}[T(n),d(n),s(n),e(n),B(n)]
$.
The first four parameters are the same as
$\T_{\mathrm{ARM}}$,
Compared with autoregressive models, MDMs introduce two additional sources of decoding flexibility: bidirectional attention within each block and arbitrary-order generation. 
The following Theorem shows that, for a constant block size, these additional mechanisms are still insufficient to move the model beyond 
$
\mathsf{TC}^0
$.

\begin{theorem}
\label{thm:mdm_experss}
$
\T_{\mathrm{MDM}}[0,\operatorname{poly}(n),\log n,0,O(1)]
\subseteq
\T_{\mathrm{MDM}}[\log n,\operatorname{poly}(n),\log n,0,O(1)]
\subseteq
\mathsf{TC}^0.
$
\end{theorem}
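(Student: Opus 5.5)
The first inclusion is immediate. A model with $0$ decoding steps beyond the input is a special case of one allowed $O(\log n)$ steps: we simply take the first step to be the only one. So all the work is in showing $\T_{\mathrm{MDM}}[\log n,\operatorname{poly}(n),\log n,0,O(1)]\subseteq\TC^0$. I would mirror the argument of \citet{li2024chain} for $\T_{\mathrm{ARM}}$. Their argument has two ingredients. First, a single forward pass of a constant-depth transformer with polynomial width and $O(\log n)$-bit fixed-point arithmetic is a $\TC^0$ circuit: matrix products, iterated addition, softmax with fixed-point exponent, and layer norm are all in $\TC^0$ at this precision. Second, composing $O(\log n)$ such passes sequentially stays in $\TC^0$, because each generated token is one of polynomially many values and the whole generation can be flattened.

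The main task is to show that the two MDM-specific features, bidirectional attention inside a block and arbitrary-order unmasking, do not break either ingredient.

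\textbf{Bidirectional attention.} Dropping the causal mask only changes which positions enter each attention sum. The attention pattern is fixed by the block structure and the current mask pattern, so each forward pass remains a constant-depth threshold circuit of polynomial size. The per-pass $\TC^0$ bound therefore carries over verbatim.

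\textbf{Arbitrary-order unmasking.} Model one decoding step as follows. The state consists of the committed prefix blocks, the current block of size $B=O(1)$, and a mask pattern in $\{0,1\}^B$. A step runs the network and applies a selection rule, such as confidence-based top-$k$ or a fixed order, that commits some masked positions. The commit is monotone: once a token is written it is never changed. The selection rule is a comparison or argmax over $O(1)$ confidence values of $O(\log n)$ bits, which is in $\TC^0$. Hence one full decoding step, forward pass plus commit, is a $\TC^0$ function of the current state.

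\textbf{Composing the $O(\log n)$ steps.} This is the step I expect to be the main obstacle, since naively stacking $\log n$ constant-depth layers gives depth $O(\log n)$. I would resolve it through boundedness of the generated content. There are only $O(\log n)$ steps, each committing at most $B=O(1)$ tokens, and the prefix is the input plus these tokens. So the entire generated trajectory is a string of $O(\log n)$ tokens over a polynomial vocabulary, and the number of possible trajectories is polynomial. This matters because with $O(1)$ block size and monotone commits, the masks and positions touched in each step are determined by the trajectory so far.

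For each of the polynomially many candidate trajectories, a $\TC^0$ circuit can check in parallel whether every step is consistent: whether the model, run on the corresponding partial state, commits exactly the next tokens of that trajectory. All these checks are independent forward passes, so they run simultaneously in constant depth. An OR/selection over the candidates then outputs the unique consistent trajectory. The total size is polynomial and the depth is constant, which gives $\TC^0$.

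The delicate points are two. The first is confirming that the number of candidate trajectories is indeed $n^{O(1)}$, which needs both the commit-count bound and the $O(1)$ block size, since mask patterns contribute only a $2^{O(\log n)}$ factor. The second is that nondeterministic tie-breaking in the selection rule is fixed by a deterministic rule, so that exactly one trajectory is consistent. Monotonicity is exactly what makes this guess-and-verify enumeration possible, in contrast with UDMs, whose rewrites would require re-verifying overwritten states.
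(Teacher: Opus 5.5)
Your plan matches the paper's: a block-causal forward pass is still a $\mathsf{TC}^0$ circuit. The $O(\log n)$ decoding steps are flattened by enumerating all candidate trajectories in parallel, checking every step of each candidate with an independent forward pass, and using unbounded fan-in AND/OR to select the unique consistent one under deterministic tie-breaking.

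The gap is in the counting step, which you rightly flag as the crux. You claim the generated content is a string of $O(\log n)$ tokens over a \emph{polynomial} vocabulary, and that the number of such strings is polynomial. That is false. With $|\mathcal V|=n^{c}$ and $c'\log n$ steps, each committing up to $B$ tokens, there are up to $|\mathcal V|^{Bc'\log n}=n^{cc'B\log n}=n^{\Theta(\log n)}$ candidates. The enumeration then yields a quasi-polynomial-size circuit, not a $\mathsf{TC}^0$ one. In general $|\mathcal V|^{O(\log n)}=n^{O(\log|\mathcal V|)}$ is polynomial only if $|\mathcal V|=O(1)$. The commit-count bound and $B=O(1)$, which you list as the only requirements, do not suffice.

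This is not a cosmetic slip. With a polynomial vocabulary, each committed token feeds $\Theta(\log n)$ bits into the next pass, and that serial dependence is exactly what guess-and-verify cannot absorb. The paper, following Li et al., takes $|\mathcal V|=O(1)$. Then the block-local state of each step (the token/mask configuration of the active block plus its generation order) ranges over a set $\Gamma_B$ with $|\Gamma_B|\le(|\mathcal V|+1)^B B!=O(1)$, and $|\Gamma_B|^{c\log n}=n^{c\log|\Gamma_B|}$ is polynomial. If you state that assumption and redo the count this way, the rest of your argument goes through.

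Your closing remark also misattributes what makes the enumeration work. The verifier only checks each guessed state against the model's output on the previous guessed state, so it does not matter whether a step overwrites earlier tokens. A UDM with an $O(1)$ workspace run for $O(\log n)$ steps would be flattened in exactly the same way. What the argument actually uses is $T=O(\log n)$ together with $O(1)$ possible actions per step. The UDM construction for $\mathsf{NC}^1$ escapes this bound because it runs $\operatorname{poly}(n)$ steps, not because rewriting defeats verification.
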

$
\T_{\mathrm{MDM}}[0,\operatorname{poly}(n),\log n,0,O(1)]
$
above
isolates the effect of bidirectional attention by removing CoT, 
$
\T_{\mathrm{MDM}}[\log n,\operatorname{poly}(n),\log n,0,O(1)]$ 
permits
$
O(\log n)
$
CoT steps with arbitrary-order generation.
However, they are still insufficient 
to provide a fundamentally stronger computational resource than autoregressive decoding.

The situation changes when previously generated states can be repeatedly rewritten. Let
$\T_{\mathrm{UDM}}[T(n), \allowbreak d(n), \allowbreak s(n), \allowbreak e(n), \allowbreak W(n)]$
be the function class of UDMs, where the first four parameters are the same as
$\T_{\mathrm{ARM}}$ and
$\T_{\mathrm{MDM}}$, while
the last parameter $W(n)$ is the size of the rewritable workspace (i.e., CoT length).
UDMs allow polynomially many revisions even with a constant-size workspace. The following Theorem shows that with polynomially many revisions over a constant workspace, UDMs can achieve $\mathsf{NC}^1$.
\begin{theorem}
\label{thm:udm_experss}
$
\mathsf{NC}^1
\subseteq
\T_{\mathrm{UDM}}
[\operatorname{poly}(n),\operatorname{poly}(n),\log n,0,O(1)].
$
\end{theorem}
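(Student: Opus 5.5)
The plan is to use Barrington's theorem. Every language in $\mathsf{NC}^1$ is recognized by a width-5 permutation branching program of polynomial length $L(n)=\operatorname{poly}(n)$. Each instruction $j$ of such a program has the form $(i_j, \sigma_j^0, \sigma_j^1)$: it reads input bit $x_{i_j}$ and multiplies the current element of $S_5$ by $\sigma_j^{x_{i_j}}$. The input is accepted if and only if the final product is a fixed 5-cycle. The program is log-space uniform, so the tuple $(i_j,\sigma_j^0,\sigma_j^1)$ is a simple function of $j$ and can be hard-wired into the network weights or computed from a positional encoding. The computation therefore reduces to iterating a constant-size state machine for $\operatorname{poly}(n)$ steps. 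This is exactly what a UDM with $\operatorname{poly}(n)$ revisions over an $O(1)$-size rewritable workspace can do.

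The construction sets $W(n)=O(1)$ workspace positions.
\begin{itemize}
\item One position holds the current permutation $\pi\in S_5$, encoded as one of $120$ vocabulary tokens.
\item A constant number of positions hold a step counter $j$. The counter uses $O(\log n)$ bits, which fit in the numerical precision or are spread over $O(1)$ tokens of a vocabulary of size $\operatorname{poly}(n)$.
\end{itemize}
Each denoising step is a single application of a constant-depth transformer $f_\theta$ with polynomial hidden dimension and $O(\log n)$-bit fixed-point precision, and it runs in four stages:
\begin{enumerate}
\item Read the counter $j$.
\item Compute the index $i_j$ and retrieve $x_{i_j}$ with one attention head. The query encodes $i_j$ and the keys are position encodings of the input, so hard attention isolates the single matching position. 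This is the standard retrieval gadget of \citet{li2024chain}, which is valid at $\log n$ precision.
\item Apply the fixed lookup table $\pi \mapsto \pi\sigma_j^{x_{i_j}}$ with an MLP, which is constant-size for a fixed $j$-dependent pair.
\item Output the updated $\pi$ and the counter $j+1$.
\end{enumerate}
The UDM reverse kernel may replace any token with any other token, so the model sets $f_\theta$ to put probability one on the updated tokens. The transition is then deterministic, and the workspace is overwritten in place at each of the $T(n)=L(n)$ steps. After $L(n)$ steps, a final readout maps $\pi$ to accept or reject.

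The steps, in order, are:
\begin{enumerate}
\item State Barrington's theorem together with its uniformity condition.
\item Fix the token encoding of $S_5\times[L(n)]$.
\item Build the constant-depth transformer for the one-step update: counter arithmetic, retrieval of $x_{i_j}$, and the permutation table. This step reuses the arithmetic and selection gadgets of \citet{li2024chain}, namely fixed-point addition, comparison, and hard attention.
\item Argue that the UDM sampling dynamics realize this update exactly.
\end{enumerate}

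The main obstacle is the last step. The UDM forward kernel $Q^{\mathrm{UDM}}$ and the reverse jump rate $\lambda_{f_\theta}$ of Theorem~\ref{thm:udm_schedule_matching} involve stochastic jump times, and the rewriting has to be made effectively deterministic. I would handle this by working in the discrete-step formulation, as the expressivity classes implicitly do. In that formulation, one revision is one forward pass whose output distribution is taken to be a point mass. Such a point mass is achievable up to $2^{-\operatorname{poly}(n)}$ error with saturated logits at $\log n$ precision, and alternatively by greedy decoding. A secondary issue is computing $i_j$ from $j$ inside constant depth. If uniformity makes this awkward, I would instead store $i_j$ directly in the workspace, so that the next index is produced by a fixed $\mathsf{TC}^0$-computable map. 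Alternatively, I would note that the DLOGTIME-uniform Barrington construction yields instruction maps that are computable in $\mathsf{AC}^0$ from the bits of $j$, and hence by a constant-depth transformer.
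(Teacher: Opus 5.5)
Your overall plan matches the paper's proof. Both use Barrington's theorem with one branching-program instruction per denoising step, exact-address attention as in \citet{li2024chain} to fetch $x_{i_t}$, a constant-size ReLU table over $S_5^3\times\{0,1\}$ for the permutation update, and argmax decoding to make the rewrite deterministic. The gap is in how you supply the step index.

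You store the counter $j\le L(n)=\operatorname{poly}(n)$ in the workspace, and that needs $\Theta(\log n)$ bits to survive from one step to the next. Only tokens persist across denoising steps, not hidden activations, so ``fit in the numerical precision'' is not available. Over a constant vocabulary the counter costs $\Theta(\log n)$ workspace tokens, and you only get $W(n)=O(\log n)$. The paper's vocabulary is $\{0,1\}$ plus the $120$ symbols $\langle\sigma\rangle$, and its MDM result explicitly uses $|\mathcal V|=O(1)$.

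Your step 2 avoids this by taking the vocabulary $S_5\times[L(n)]$, which has polynomial size. That changes the resource being bounded. With $n^{O(1)}$ symbols, $O(1)$ tokens already hold a whole log-space machine configuration, so the same template would simulate any log-space machine, and the constraint $W(n)=O(1)$ stops meaning anything. Your fallback of storing $i_j$ instead does not help. A Barrington program revisits the same variable many times, so $i_{j+1}$ is not a function of $i_j$, and $i_j$ is itself $\log n$ bits.

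The missing idea is that the denoiser is conditioned on the diffusion time, so the step index can come from outside the workspace. The paper lets the timestep embedding at step $t$ encode the instruction $(i_t,\pi_t^0,\pi_t^1)$ directly, as $\operatorname{sbin}_k(i_t)$, $e_{\pi_t^0}$, $e_{\pi_t^1}$. This mirrors how \citet{li2024chain} hard-wire circuit wiring into positional encodings in their non-uniform model. The workspace is then a single $S_5$ token, there is no counter arithmetic, and no uniformity of the branching program is needed.

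That also settles your concern about computing $i_j$ from $j$ in constant depth. The $\mathsf{NC}^1$ here is non-uniform, so the program need not be log-space uniform and you would hard-wire the instructions rather than appeal to DLOGTIME uniformity. With that one substitution, the rest of your construction goes through as in the paper.
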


\begin{figure}[t]
    \centering
    \includegraphics[width=0.8\linewidth]{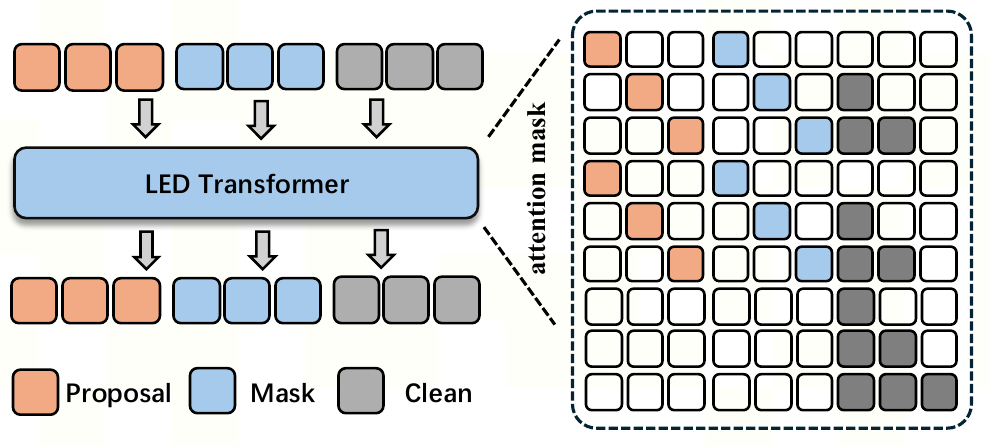}
    \caption{ 
    Training of Lookahead Editing.
    The proposal, masked, and clean branches are concatenated and processed by the \ours{} Transformer in a single forward pass.
    The structured attention mask controls information flow across the three branches, where rows denote query tokens and columns denote key/value tokens.
    Within each proposal mask block pair, the two branches attend bidirectionally to each other and to all preceding clean blocks, while the clean branch follows block-causal attention.
    }
    \label{fig:training}
\end{figure}

\section{Proposed Method: Lookahead Editing} 
While UDMs offer strong expressivity through iterative state revision, their denoiser-dependent transition schedules hinder optimization. 
MDMs benefit from simple, schedule-free optimization but suffer from constrained expressivity due to non-rewritable decoding.
To bridge this gap, we propose Lookahead Editing (\ours{}), a simple yet effective framework that introduces a rewritable proposal state into masked diffusion decoding.
We detail its training and inference procedures below.

\subsection{Training} 
Traditional MDMs adopt a two-branch optimization framework consisting of a masked branch and a clean branch~\citep{wu2025fast2, cheng2025sdar, fu2026nemotron}.
To enable editing capabilities in MDMs, we add a proposal branch that yields both correct and erroneous tokens, allowing the model to identify erroneous tokens and subsequently correct them while retaining correct tokens.
The pseudocode is shown in \cref{alg:training}.
For each training sequence, we construct a three-branch input consisting of:
(i) a clean branch representing the ground-truth sequence $\mathtt{X}_0$; 
(ii) a masked branch $\mathtt{X}_t$, obtained by applying standard masking \citep{arriola2025block} to $\mathtt{X}_0$; and 
(iii) a proposal branch $\widehat{\mathtt{X}}_0$, generated by a pretrained masked block diffusion model $f_{\textup{MDM}}$ conditioned on the concatenated clean and masked branches $[\mathtt{X}_0, \mathtt{X}_t]$.
We concatenate the clean branch $\mathtt{X}_0$, masked branch $\mathtt{X}_t$, and proposal branch $\widehat{\mathtt{X}}_0$ as input to $f_{\ours{}}$.
The attention mask is illustrated in \cref{fig:training}. 
Within each mask–proposal block pair, the two blocks attend bidirectionally to each other as well as to all preceding clean blocks, whereas attention within the clean branch remains strictly block-causal.
This design enables bidirectional interactions between mask and proposal blocks, while preventing information leakage from future clean tokens.
The model outputs $\mathtt{X}_{\mathrm{logit}}$ corresponding to the masked branch are optimized via a cross-entropy loss against the clean target sequence $\mathtt{X}_0$.
Because the proposal tokens generated by $f_{\textup{MDM}}$ are a mix of correct predictions and errors, they provide natural supervision for editing.
Specifically, the loss encourages the model to selectively retain accurate proposal tokens while updating inaccurate ones.

\begin{figure*}[t]
\centering

\begin{minipage}[t]{0.485\textwidth}
\hrule
\vspace{2pt}
\refstepcounter{algorithm}
\label{alg:training}
\noindent
\textbf{Algorithm 1} \ours{} Training.
\vspace{2pt}
\hrule
\vspace{4pt}

\begin{algorithmic}
\State \textbf{Input:} clean sequence $\mathtt{X}_0$, forward noise process
$q(\cdot\mid\mathtt{X}_0, t)$, pretrained model $f_{\mathtt{MDM}}$,
model $f_{\ours{}}$, loss $\mathcal{L}_{\mathrm{CE}}$

\Repeat
    \State Sample $t \sim \mathcal{U}[0,1]$
    \State $\mathtt{X}_t
    \sim p(\cdot\mid\mathtt{X}_0, t)$

    \State $\widehat{\mathtt{X}}_0
    \gets
    f_{\mathtt{MDM}}
    ([\mathtt{X}_0,\mathtt{X}_t])$

    \State $\mathtt{X}_{\mathrm{logit}}
    \gets
    f_{\ours{}}
    ([\mathtt{X}_0,\mathtt{X}_t,\widehat{\mathtt{X}}_0])$

    \State $\mathtt{X}_{\mathrm{logit}}
    \gets
    \mathtt{X}_{\mathrm{logit}}[\mathtt{X}_t]$

    \State Take gradient step on
    $\nabla_\theta
    \mathcal{L}_{\mathrm{CE}}
    (\mathtt{X}_{\mathrm{logit}};\mathtt{X}_0)$
\Until{converged}
\end{algorithmic}

\vspace{3pt}
\hrule
\end{minipage}
\hfill
%
\begin{minipage}[t]{0.485\textwidth}
\hrule
\vspace{2pt}
\refstepcounter{algorithm}
\label{alg:inference}
\noindent
\textbf{Algorithm 2} \ours{} Inference.
\vspace{2pt}
\hrule
\vspace{4pt}

\begin{algorithmic}
\State \textbf{Input:} model $f_{\ours{}}$,
lookahead steps $K$, sampler algorithm  $\mathcal{S}$

\State $\mathtt{X}^\texttt{T}, \mathtt{LX}^\texttt{t0}, \mathtt{C}^\texttt{T}
    \gets
    [\texttt{MASK}]^L$
{
    \algtext*{EndFor}
    \For{$t=T$ \textbf{to} $1$}

    \State $\mathtt{LC}^\texttt{t1}\gets f_{\ours{}}(\mathtt{p}, \mathtt{X}^t, \mathtt{C}^\texttt{t})$ \Comment{Step 1}
    \State $\mathtt{LX}^\texttt{t1}, \mathtt{LC}^\texttt{t1}\gets \mathcal{S}(\mathtt{LO}^\texttt{t0}), \mathtt{LO}^\texttt{t0}$

        \For{$k=1$ \textbf{to} $K$} \Comment{Step 2}
            \State $\texttt{LO}^\texttt{tk}
            \gets
            f_{\ours{}}
            (\mathtt{p},
            \texttt{LX}^\texttt{tk},
            \texttt{LC}^\texttt{tk})$
            \State $\texttt{LX}^\texttt{t,k+1}, \texttt{LC}^\texttt{t,k+1}\gets \mathcal{S}(\texttt{LO}^\texttt{tk}), \texttt{LO}^\texttt{tk}$
        \EndFor

        \State $\mathbf{C}^t
        \gets
        f_{\ours{}}
        ( \mathtt{p},
        \mathtt{X}^t,
         \texttt{LC}^\texttt{t,K+1})$ \Comment{Step 3}

        \State $\mathtt{X}^{t-1}, \mathtt{C}^{t-1}\gets \mathcal{S}(\mathtt{C}^t), \mathtt{C}^t$
    \EndFor
        }
    \Return $\mathtt{X}^0$
 \end{algorithmic}

\hrule
\end{minipage}
\caption{Training and inference procedures of Lookahead Editing. During training, a pretrained MDM generates the proposal, and \ours{} is trained on the resulting clean, noised, and proposal branches. During inference, \ours{} performs \(K\) iterative lookahead updates before each actual decoding step while retaining the original mask-based sampling schedule.}
\end{figure*}

\begin{figure}[t]
    \centering
    \includegraphics[width=1.0\linewidth]{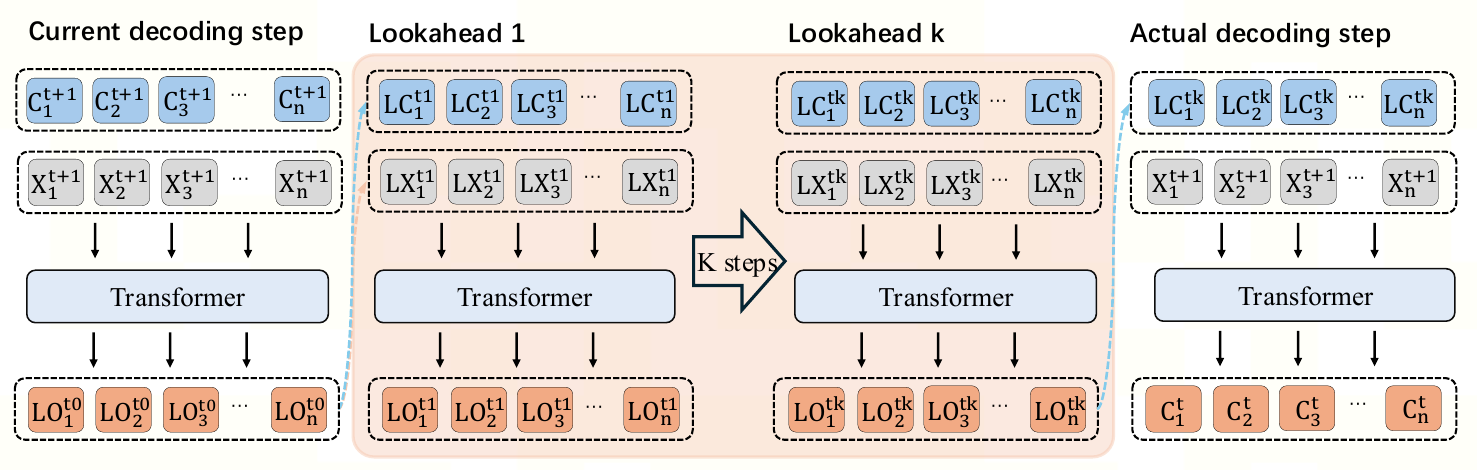}
\caption{
    Lookahead Editing decoding process. Orange arrow denotes and blue arrow denotes sample. At each decoding step, \ours{} performs \(K\) lookahead updates on the rewritable proposal state, allowing future predictions to be iteratively explored and refined. The resulting proposal is then used to perform the actual decoding update and advance the sequence to the next decoding step.
    }
    \label{fig:demo}
\end{figure}

\subsection{Inference}

The pseudocode and schematic overview of our inference procedure are provided in \cref{alg:inference} and \cref{fig:demo}, respectively. 
At decoding step $\mathtt{t}$, inference maintains three sequence components: prompt $\mathtt{p}$, a partially masked sequence $\mathtt{X}^{\mathtt{t}+1}$, and a rewritable proposal sequence $\mathtt{C}^{\mathtt{t}+1}$. 
Rather than immediately committing tokens from a single-step prediction, $f_{\ours{}}$ performs {Lookahead Editing} to iteratively refine future predictions prior to commitment:
\begin{enumerate}
    \item Initial proposal generation:
    At the beginning
    of step $\mathtt{t}$, the current proposal 
    $\mathtt{C}^{\mathtt{T}}$ and masked sequence $\mathtt{X}^{\mathtt{T}}$  are initialized as fully masked sequences and processed by $f_{\ours{}}$. 
    This yields an initial prediction $\mathtt{LO}^{\mathtt{t}_0}$, which is  copied to $\mathtt{LC}^{\mathtt{t1}}$ and serves as the base proposal for the subsequent lookahead iterations.
    \item Lookahead proposal refinement:  
    As illustrated in \cref{fig:demo},
    for each lookahead step $\mathtt{k} \in \{1, \dots, K\}$, the proposal block
    $\mathtt{LC}^{\mathtt{t}\mathtt{k}}$ is updated based on the model's output from the preceding step. 
    Passing $\mathtt{LX}^{\mathtt{t}{\mathtt{k}}}$ and $\mathtt{LC}^{\mathtt{t}{\mathtt{k}}}$ into $f_{\ours{}}$ yields updated predictions $\mathtt{LO}^{\mathtt{t}{\mathtt{k}}}$. 
    These predictions serve as the proposal sequence $\mathtt{LC}^{\mathtt{t}{\mathtt{k+1}}}$ for the subsequent lookahead step, while high-confidence tokens are sampled by sampler $\mathcal{S}$ into the temporary $\mathtt{LX}^{\mathtt{t}{\mathtt{k+1}}}$.
    This allows the draft proposal to be iteratively rewritten and explored across $K$ steps.
    
    \item {Decoding commitment:}  
    After $K$ lookahead steps, the highly refined proposal  $\mathtt{LC}^{\mathtt{t},{\mathtt{K+1}}}$ is combined with $\mathtt{X}^{\mathtt{t}-1}$ to compute token confidence scores. 
    The model then commits a subset of high-confidence tokens in $\mathtt{C}^{\mathtt{t}}$ to get  $\mathtt{X}^{\mathtt{t-1}}$, advancing the sequence to the next decoding step.
\end{enumerate}

Decoding alternates between rewritable lookahead drafting and irreversible token commitment until the current block is fully decoded, after which it is appended to $\mathtt{p}$.

\section{Experiments}

\subsection{Experiment Setup}
\paragraph{Training Set}
We construct the training
corpus for SFT by combining several curated open-source post-training datasets, including
the NVIDIA OpenMathReasoning~\citep{moshkov2025aimo}, NVIDIA OpenScience~\citep{nvidia2025openscience},
NVIDIA OpenCodeReasoning2~\citep{ahmad2025opencodereasoning}, 
OpenThoughts3~\citep{guha2026openthoughts}, and Mixture-of-Thoughts~\citep{open-r1}.

\paragraph{Benchmarks}
We evaluate the models on a comprehensive suite of tasks covering diverse aspects of reasoning abilities. The evaluation suite includes (i) mathematical reasoning tasks: MATH~\citep{Saxton2019Analysing}, AIME 2024~\citep{aime_2024} and AIME 2025~\citep{aime_2025}; (ii) scientific reasoning: GPQA~\citep{rein2023gpqa}; and (iii) code generation tasks: HumanEval, HumanEval+~\citep{chen2021evaluating}, MBPP, and MBPP+~\citep{austin2021program}.
All benchmarks are evaluated in a zero-shot setting. We report accuracy on MATH, AIME 2024, AIME 2025, and GPQA, and Pass@1 on HumanEval, HumanEval+, MBPP, and MBPP+.

\paragraph{Baseline}
We compare \ours{} against 7B--16B models, including MDMs such as Fast-dLLM~\citep{wu2025fast2}, Dream~\citep{ye2025dream}, Nemotron~\citep{fu2026nemotron}, and SDAR~\citep{cheng2025sdar}, as well as hybrid MDM--UDM models such as LLaDA 2.1~\citep{bie2026llada2} and XDLM~\citep{liu2026balancing}. All models use confidence-based sampler with a threshold of $0.9$.

\begin{table*}[t]
\centering
\caption{
Results on mathematical reasoning, scientific reasoning, and code generation benchmarks.
Models are grouped by their post-training recipes.
For LED, the green superscripts indicate the absolute improvements over the corresponding MDM-SFT baseline with the same model size.
Avg. denotes the average performance across all eight benchmarks.
The best result in each column is shown in \textbf{bold}, and the second-best result is \underline{underlined}.
}
\label{tab:benchmark_result}

\renewcommand{\arraystretch}{1.35}
\resizebox{\textwidth}{!}{%
\begin{tabular}{lccccccccc}
\toprule

\multirow{2}{*}{Model}
& \multicolumn{3}{c}{Math}
& \multicolumn{1}{c}{Science}
& \multicolumn{4}{c}{Code}
& \multirow{2}{*}{Avg.}
\\

\cmidrule(lr){2-4}
\cmidrule(lr){5-5}
\cmidrule(lr){6-9}

& MATH500
& AIME24
& AIME25
& GPQA
& HumanEval
& HumanEval+
& MBPP
& MBPP+
&
\\

\midrule

\multicolumn{10}{l}{\textit{CPT+SFT+RL}} \\

LLaDA 2.1
16B
& \textbf{92.8}
& \textbf{50.0}
& \underline{36.7}
& \underline{50.5}
& \underline{84.8}
& \underline{80.5}
& \textbf{80.4}
& \textbf{67.2}
& \underline{67.9}
\\

\midrule

\multicolumn{10}{l}{\textit{CPT + SFT}} \\

SDAR
8B
& 68.8
& 10.0
& 6.7
& 33.3
& 73.8
& 68.9
& 75.4
& 62.2
& 49.9
\\

Nemotron
8B
& 86.6
& 20.0
& 26.7
& 39.4
& 82.3
& 79.3
& \underline{77.0}
& \underline{64.6}
& 59.5
\\

Dream
7B
& 39.2
& 0.0
& 0.0
& 26.8
& 48.8
& 43.9
& 67.7
& 55.8
& 35.3
\\

\midrule

\multicolumn{10}{l}{\textit{SFT}} \\
XDLM
8B
& 30.2
& 3.3
& 0.0
& 29.8
& 28.0
& 23.8
& 43.4
& 34.4
& 24.1
\\

Fast-dLLM
7B
& 58.0
& 0.0
& 6.7
& 31.9
& 63.4
& 58.5
& 63.0
& 52.3
& 41.7
\\

MDM-SFT
7B
& 72.0
& 13.3
& 13.3
& 22.7
& 62.8
& 59.8
& 64.0
& 55.0
& 45.4
\\

\ours{}
7B
& \scoregain{86.0}{14.0}
& \scoregain{26.7}{13.3}
& \scoregain{23.3}{10.0}
& \scoregain{33.8}{11.1}
& \scoregain{75.0}{12.2}
& \scoregain{70.7}{10.9}
& \scoregain{74.6}{10.6}
& \scoregain{64.0}{9.0}
& 56.8
\\

MDM-SFT
8B
& 88.8
& \underline{36.7}
& 23.3
& 46.5
& 79.9
& 74.4
& 74.6
& 61.9
& 60.8
\\

\ours{}
8B
& \scoregain{\underline{92.6}}{3.8}
& \scoregain{\textbf{50.0}}{13.3}
& \scoregain{\textbf{40.0}}{16.7}
& \scoregain{\textbf{53.6}}{7.1}
& \scoregain{\textbf{87.2}}{7.3}
& \scoregain{\textbf{84.1}}{9.7}
& \scoregain{75.4}{0.8}
& \scoregain{65.1}{3.2}
& \textbf{68.5}
\\
\bottomrule
\end{tabular}
}
\end{table*}

\subsection{Comparison with Existing dLLMs}
We initialize our models from Qwen2.5-7B and Qwen3-8B and perform \ours{} SFT. All models are trained with a block size of 32, a global batch size of 512, a sequence length of 16,384, and a constant learning rate of $10^{-5}$ with Adam optimizer.
Compared with SFT-only baselines, including
Fast-dLLM (41.7 average accuracy) and XDLM (24.1), \ours{}-\texttt{8B} achieves an average score of 68.5, outperforming them by \(+26.8\) and \(+44.4\) points, respectively.
We further compare \ours{} with dLLMs trained using more extensive post-training pipelines.
Among models trained with continued pre-training (CPT) followed by SFT, Nemotron-8B achieves the highest average score of 59.5, while SDAR-8B  and Dream-7B achieve 49.9 and 35.3, respectively.
With SFT alone, \ours{}-\texttt{8B} achieves an average score of 68.5, outperforming Nemotron-8B by $+9.0$ points.
Moreover, \ours{}-\texttt{8B} also surpasses LLaDA2.1-16B, which is trained with CPT, SFT, and RL, by $+0.6$ points on average.
These show that \ours{} is highly competitive despite using SFT alone.

Moreover, we also train \ours{} and the standard MDM SFT~\citep{arriola2025block} on the same dataset under identical training settings for a controlled comparison.
As shown in \cref{tab:benchmark_result}, \ours{} consistently improves over the corresponding MDM-SFT baseline across all eight benchmarks and at both model scales.
For the 7B model, \ours{} increases the average score from 45.4 to 56.8, corresponding to an absolute improvement of $+11.4$ points.
The gains are consistent across mathematical reasoning, scientific reasoning, and code generation, with the largest improvement of $+13.4$ points on AIME 2024.
A similar trend is observed for the 8B model, where \ours{} improves the average score from 60.8 to 68.5, yielding a $+7.7$-point gain.
In particular, \ours{} achieves a substantial improvement of $+16.7$ points on AIME 2025.
These results demonstrate that introducing rewritable lookahead proposals consistently improves masked diffusion fine-tuning across different model scales and task domains.

\begin{figure}[t]
    \centering
    \includegraphics[width=0.48\linewidth]{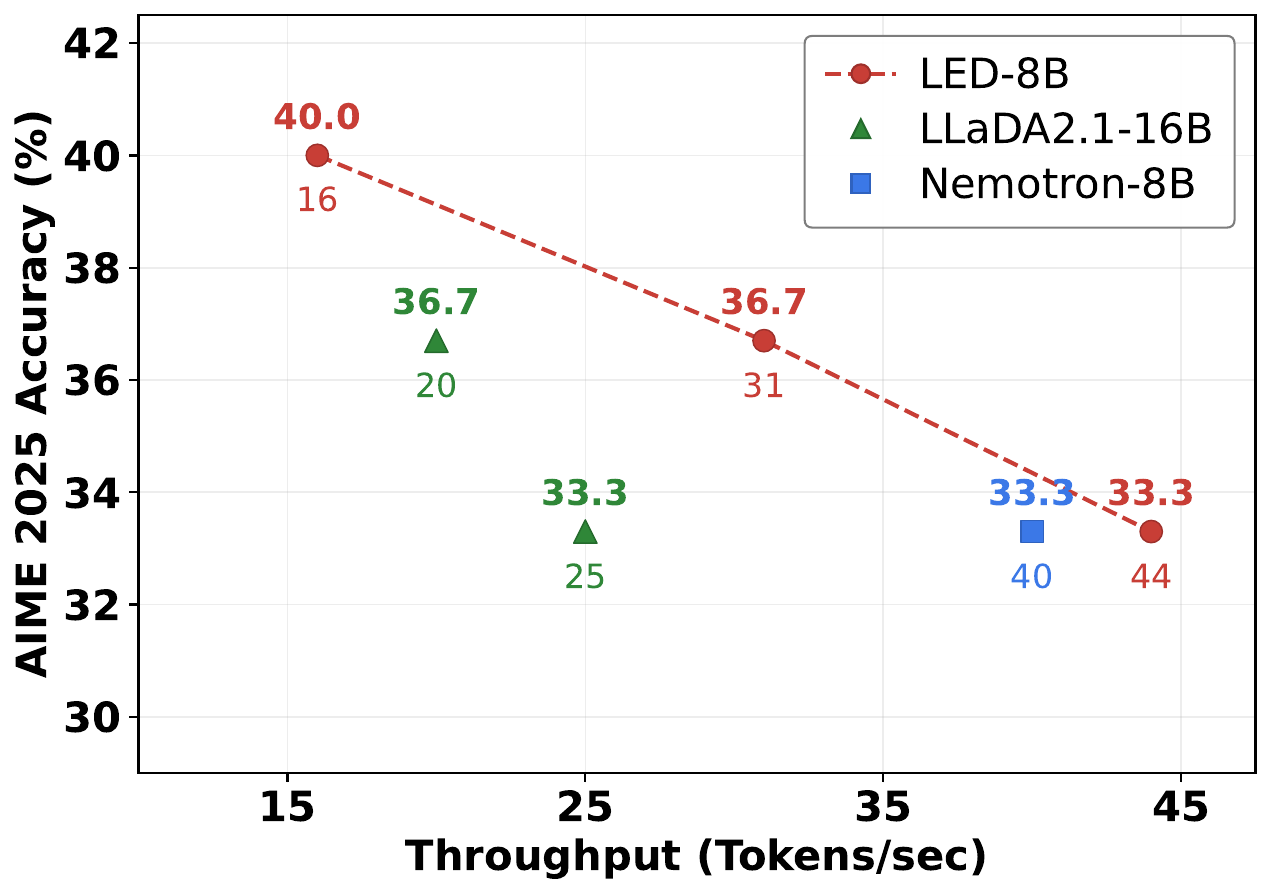}
    \includegraphics[width=0.48\linewidth]{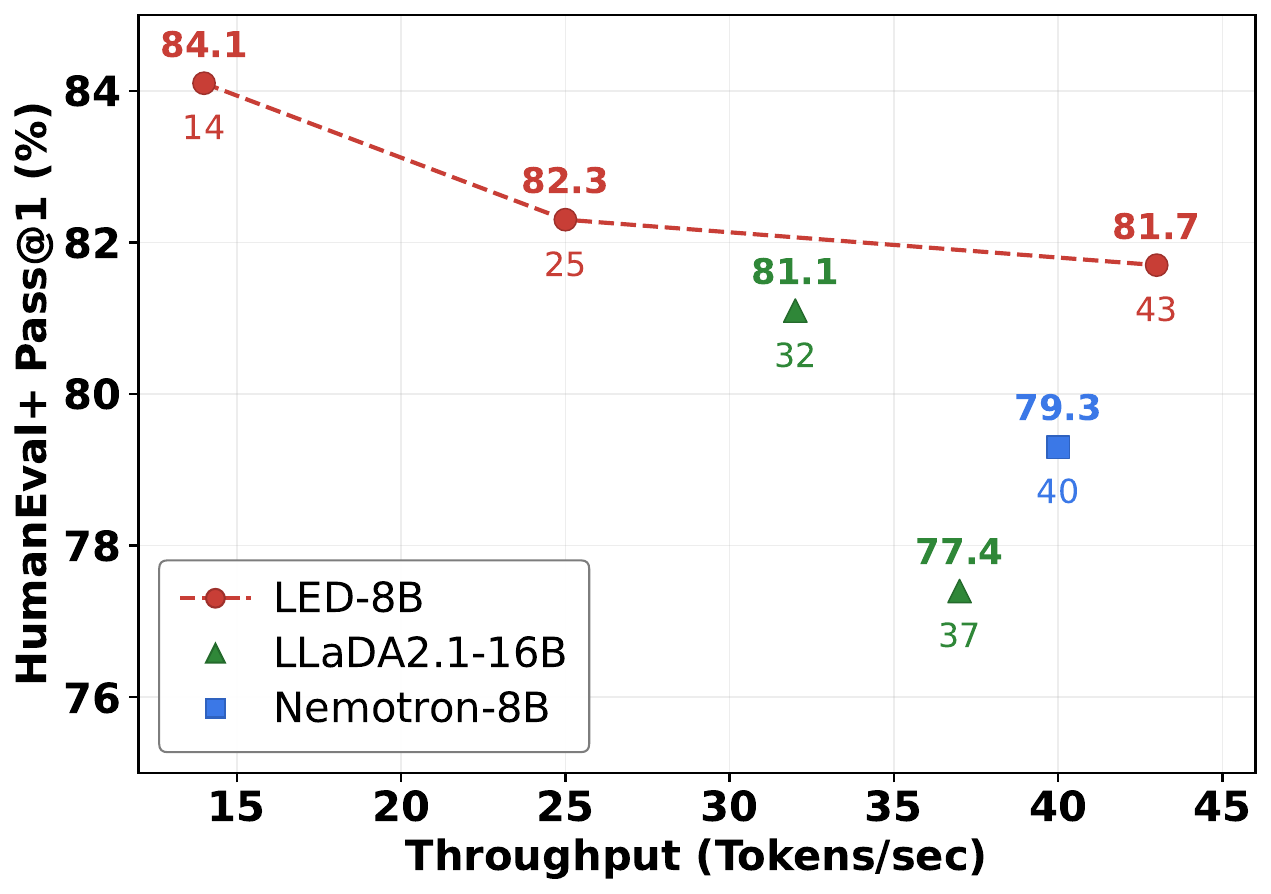}
    \caption{
    Speed--accuracy trade-off of \ours{}-\texttt{8B}, LLaDA2.1-16B, and Nemotron-8B on AIME 2025 and HumanEval+ dataset.
    For \ours{}-\texttt{8B}, different operating points are obtained by varying the number of lookahead steps.
    For LLaDA2.1-16B, we report its Q and S modes.
    \ours{}-\texttt{8B} achieves a stronger Pareto frontier across both benchmarks.
    }
    \label{fig:speed_accuracy}
\end{figure}

\subsection{Speed-Accuracy Trade-off}

\cref{fig:speed_accuracy} compares the speed-accuracy trade-off of \ours{}-\texttt{8B} against LLaDA2.1-16B and Nemotron-8B on AIME 2025 and HumanEval+ datasets.
For \ours{}-\texttt{8B}, we vary the number of lookahead steps from 0 to 2,
yielding different operating points along the accuracy--throughput Pareto front.
For LLaDA2.1-16B, we report results for both its Q and S modes, which are designed to prioritize generation quality and speed, respectively.
For Nemotron-8B, we report the accuracy obtained using greedy threshold decoding.

On AIME 2025 dataset, \ours{}-\texttt{8B} achieves accuracy--throughput pairs of
$(40.0, 16)$, $(36.7, 31)$, and $(33.3, 44)$, where throughput is measured
in tokens per second.
In comparison, LLaDA2.1-16B achieves $(36.7, 20)$ and $(33.3, 25)$ in its
Q and S modes, respectively.
At matched accuracy, \ours{}-\texttt{8B} consistently achieves substantially
higher throughput: at $36.7\%$ accuracy, it reaches $31$ tokens/s compared
with $20$ tokens/s for LLaDA2.1-16B ($1.55\times$ faster), while at
$33.3\%$ accuracy, it reaches $44$ tokens/s compared with $25$ tokens/s
($1.76\times$ faster).
Nemotron-8B achieves its highest accuracy
$33.3\%$ at $40$ tokens/s, which is also
dominated by \ours{}-\texttt{8B}, which attains the same accuracy at
$44$ tokens/s.

A similar trend is observed on HumanEval+.
\ours{}-\texttt{8B} achieves $(84.1, 14)$, $(82.3, 25)$, and $(81.7, 43)$,
whereas LLaDA2.1-16B achieves $(81.1, 32)$ and $(77.4, 37)$.
In particular, at its high-throughput operating point, \ours{}-\texttt{8B}
achieves both higher accuracy and higher throughput than LLaDA2.1-16B:
$81.7\%$ at $43$ tokens/s versus $81.1\%$ at $32$ tokens/s.
It also Pareto-dominates Nemotron-8B, which achieves $79.3\%$ at
$40$ tokens/s.
Overall, these results show that lookahead provides \ours{} with a flexible
mechanism for trading generation speed for accuracy, while yielding a
stronger speed--accuracy Pareto frontier than the compared diffusion
language models across both mathematical reasoning and code generation.

\begin{figure}
    \centering
    \includegraphics[width=0.48\linewidth]{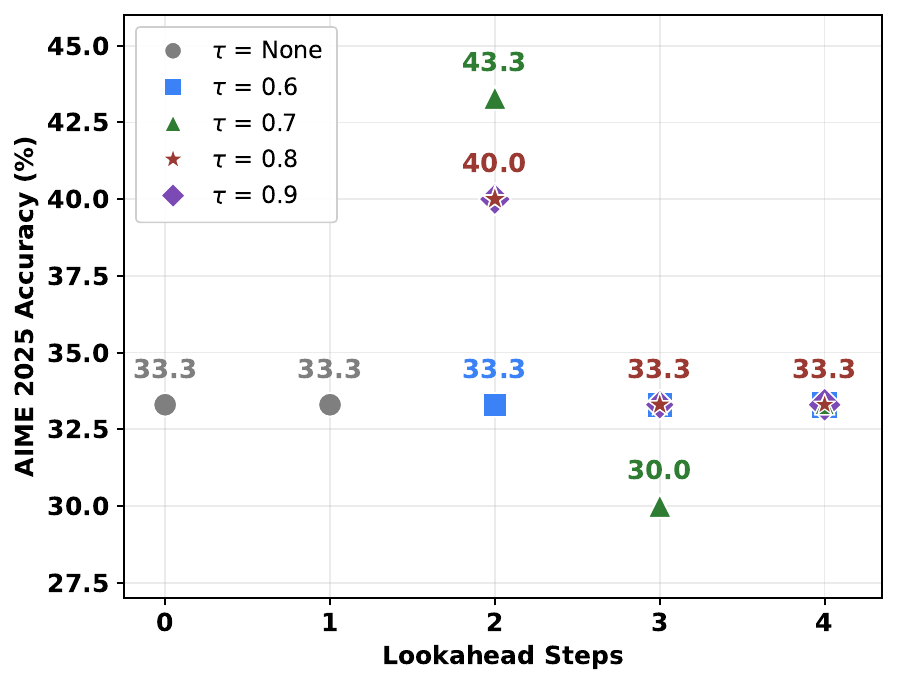}
    \includegraphics[width=0.48\linewidth]{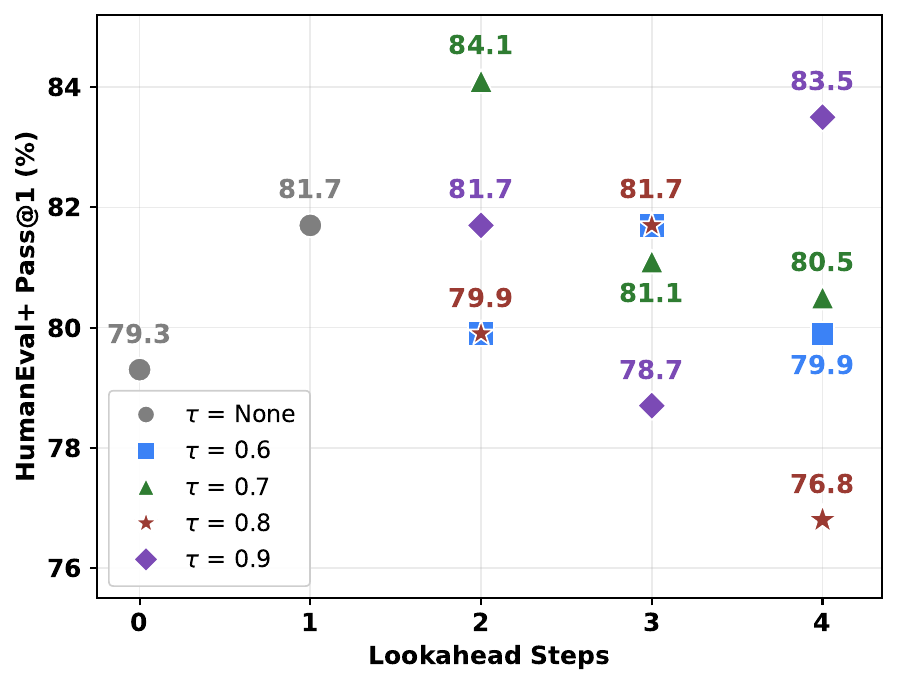}
    \caption{
    Hyper-parameter analysis of the number of lookahead steps \(K\) and proposal threshold \(\tau\) on AIME 2025 and HumanEval+ dataset. A moderate number of lookahead steps (\(K=2\)) generally yields the strongest performance, with \(\tau=0.7\) achieving the best results on AIME 2025 and HumanEval+ benchmarks.  
    }
    \label{fig:hyperparameter}
\end{figure}
\subsection{Hyper-parameter Analysis}

We study the effect of two key hyper-parameters in \ours{}: proposal
threshold $\tau$ and number of 
lookahead steps $K$.
Figure~\ref{fig:hyperparameter}
reports the results on AIME 2025 and HumanEval+.
Overall, the two hyper-parameters jointly control how aggressively the model
exploits its lookahead proposals, and we find that a moderate lookahead times
combined with an appropriate proposal threshold yields the best performance.

\paragraph{Proposal Threshold.}
The proposal threshold $\tau$ controls the acceptance of lookahead proposals.
At a fixed number of lookahead iterations, the performance varies noticeably with $\tau$,
indicating that appropriately filtering proposal tokens is important.
For $K=2$, increasing the threshold from $\tau=0.6$ to $\tau=0.7$
substantially improves AIME 2025 accuracy from $33.3\%$ to $43.3\%$, while
HumanEval+ increases from $79.9\%$ to $84.1\%$.
Further increasing the threshold does not consistently improve performance:
on AIME 2025, $\tau=0.8$ and $\tau=0.9$ both achieve approximately $40.0\%$,
This suggests that an overly aggressive or overly conservative proposal
selection criterion can both be suboptimal, and that an intermediate
threshold provides a better balance between exploiting useful proposals and
filtering unreliable ones.
Among the configurations shown, $\tau=0.7$ with $K=2$ achieves the best
performance on both benchmarks.

\paragraph{Number of Lookahead Steps $K$.}
Without sufficiently deep lookahead, the model obtains relatively limited
gains: on AIME 2025, the accuracy remains at $33.3\%$ for $K=0$ and $K=1$.
Increasing the look ahead times to $K=2$ leads to a substantial improvement,
reaching up to $43.3\%$ on AIME 2025 and $84.1\%$ on HumanEval+.
However, further increasing $K$ does not consistently yield additional
benefits.
On AIME 2025, performance drops considerably for $K=3$ and $K=4$ across
different threshold choices, suggesting that excessively long lookaheads may
accumulate inaccurate proposals and reduce their usefulness for subsequent
generation.
A similar, although less pronounced, trend is observed on HumanEval+.
Interestingly, with a conservative threshold of $\tau=0.9$, $K=4$ still
achieves $83.5\%$, indicating that longer lookahead can remain effective when
low-confidence proposals are sufficiently filtered.
Overall, these results suggest that a moderate look ahead times, particularly
$K=2$, provides a robust balance between obtaining informative future
context and avoiding error accumulation from excessively long proposal
lookaheads.

\section{Conclusion}
In this work, we analyzed the trade-off between optimization simplicity and computational expressivity in diffusion language models.
Our theoretical analysis reveals that: masked diffusion models are easier to optimize due to their simple mask-based transition schedules but have limited iterative revision capability, whereas uniform diffusion models achieve greater computational expressivity through iterative state revision at the cost of more challenging optimization.
Motivated by this trade-off, we propose Lookahead Editing (\ours{}), which augments masked diffusion decoding with a rewritable proposal state.
By repeatedly refining future predictions in the proposal state before committing tokens to the masked sequence, \ours{} introduces iterative revision capability while preserving the simple transition structure of masked diffusion.
Across eight reasoning and code-generation benchmarks, \ours{} consistently improves over standard MDM fine-tuning, with \ours{}-\texttt{8B} further outperforming the more extensively post-trained LLaDA2.1-16B while achieving a stronger speed--accuracy Pareto front than existing dLLM baselines.
Overall, our results demonstrate that \ours{} effectively bridges the gap between the optimization simplicity of masked diffusion and the iterative refinement capability of more expressive diffusion processes.
\bibliographystyle{plainnat}
\bibliography{main}

\clearpage

\beginappendix

\section{Proof of \cref{thm:mdm_where}}

\begin{proof}
For a masked diffusion model, let
$
S(X)
:=
\inf
\left\{
t\in[0,1]:
X_t=m
\right\}
$
denote the transition time from a clean token to the absorbing mask state $m$.
We have
\begin{align}
p\!\left(S(X)\le t\right)
&=
p(X_t=m\mid X_0=x)
 =
1-\alpha_t.
\label{eq:forward_schedule_cdf}
\end{align}
Hence, the cumulative distribution function and the corresponding
probability density function of the forward transition schedule are
\begin{align}
F_p(t)
=
1-\alpha_t,
\quad
p_S(t)
=
\frac{d}{dt}F_p(t)
=
-\dot{\alpha}_t.
\label{eq:forward_schedule_density}
\end{align}

We next derive the transition-time distribution under the reverse
process. For $0\le s<t\le1$, by Bayes' rule,
\begin{align}
p(X_s=m\mid X_t=m,X_0=x)
&=
\frac{
p(X_t=m\mid X_s=m,X_0=x)
p(X_s=m\mid X_0=x)
}{
p(X_t=m\mid X_0=x)
}
\nonumber\\
&=
\frac{1-\alpha_s}{1-\alpha_t},
\label{eq:exact_reverse_posterior}
\end{align}

The standard MDM reverse transition is obtained by marginalizing this
exact posterior over the model prediction of the clean token:
\begin{align}
q_\theta(X_s=m\mid X_t=m)
&=
\sum_{x }
p(X_s=m\mid X_t=m,X_0=x)
f_\theta(x\mid X_t,t)
\nonumber\\
&=
\sum_{x }
\frac{1-\alpha_s}{1-\alpha_t}
f_\theta(x\mid X_t,t)
\nonumber\\
&=
\frac{1-\alpha_s}{1-\alpha_t}
\sum_{x }
f_\theta(x\mid X_t,t)
\nonumber\\
&=
\frac{1-\alpha_s}{1-\alpha_t},
\label{eq:reverse_mask_probability}
\end{align}

Since $\alpha_1=0$, we have:
\begin{align}
p(X_1=m\mid X_0=x)
&=
1-\alpha_1
=
1.
\label{eq:terminal_mask_probability}
\end{align}
Setting the later time in \cref{eq:reverse_mask_probability} to $1$
therefore gives
\begin{align}
q_\theta(X_t=m\mid X_1=m)
&=
\frac{1-\alpha_t}{1-\alpha_1}
=
1-\alpha_t.
\label{eq:reverse_mask_marginal}
\end{align}

Under the reverse path distribution, the same deterministic map $S(X)$
associates each trajectory with its transition time. Since
$S(X)\le t$ if and only if $X_t=m$, we have
\begin{align}
q_\theta\!\left(S(X)\le t\right)
&=
q_\theta(X_t=m\mid X_1=m)
=
1-\alpha_t.
\label{eq:reverse_schedule_cdf}
\end{align}
Thus, the cumulative distribution function and probability density
function of the reverse transition schedule are
\begin{align}
F_{q_\theta}(t)
=
1-\alpha_t,
q_{\theta,S}(t)
\quad
\frac{d}{dt}F_{q_\theta}(t)
=
-\dot{\alpha}_t.
\label{eq:reverse_schedule_density}
\end{align}

Comparing
\cref{eq:forward_schedule_density,eq:reverse_schedule_density}, we have
$
p_S(t)
=
q_{\theta,S}(t)
=
-\dot{\alpha}_t,
\;
\forall\,t\in[0,1].
$
Since the two schedule distributions have the same density, the pushforward distributions induced by $S(X)$ under the
forward and reverse path distributions coincide:
$
p(S)
=
q_\theta(S).
$
Consequently,
\begin{align}
\mathcal{L}_{\mathrm{when}}(\theta)
&=
D_{\mathrm{KL}}
\left(
p(S)\,\|\,q_\theta(S)
\right)
=
0.
\end{align}
\end{proof}
\section{Proof of \cref{thm:udm_schedule_matching}}

For a trajectory $X$ in UDMs,
define its transition schedule as
\[
S(X)=\{t_1,\ldots,t_M\},
\qquad
0<t_1<\cdots<t_M<1,
\]
where each $t_i$ satisfies
$
X_{t_i^-}\neq X_{t_i}.
$
The UDM forward kernel is
$
Q_{s,t}
=
\alpha_{s,t} I
+
(1-\alpha_{s,t})
\frac{1}{V}\mathbf 1\mathbf 1^\top,
$
where
$
\alpha_t := \alpha_{0,t},
\;
\alpha_{s,t} = \frac{\alpha_t}{\alpha_s}.
$
Define the forward noise rate
$
\beta_t
:=
-\frac{d}{dt}\log \alpha_t.
$
It follows that
$
\dot \alpha_t = -\beta_t \alpha_t.
$
For an infinitesimal interval of length $h>0$,
\begin{align*}
\alpha_{t,t+h}
=
\frac{\alpha_{t+h}}{\alpha_t}
=\frac{\alpha_t+\dot\alpha_t h+o(h)}{\alpha_t}
&= 1-\beta_t h+o(h).
\end{align*} 

For any $y\neq x$,
\[
\begin{aligned}
\Pr(X_{t+h}=y\mid X_t=x)
=
\frac{1-\alpha_{t,t+h}}{V}
=
\frac{\beta_t}{V}h+o(h).
\end{aligned}
\]
Therefore, the forward jump rate is
$r_t(x,y)
=
\frac{\beta_t}{V},
 y\neq x.
$
Summing over all possible destination states gives the total forward jump
rate
\[
\begin{aligned}
\lambda^*(t)
&=
\sum_{y\neq x} r_t(x,y)
 =
\frac{V-1}{V}\beta_t.
\end{aligned}
\]

We now derive the induced schedule density. Let
$
R(a,b)
$ denote the 
probability that no transition occurs on $(a,b]$.
By the definition of the total jump rate,
\[
R(a,b+h)
=
R(a,b)
\left[
1-\lambda^*(b)h+o(h)
\right].
\]
Therefore,
\[
\frac{\partial}{\partial b}R(a,b)
=
-\lambda^*(b)R(a,b),
\]
with initial condition $R(a,a)=1$. Solving this differential equation gives
\[
R(a,b)
=
\exp
\left(
-\int_a^b \lambda^*(u)\,du
\right).
\]

Consequently, for a schedule
$
S=\{t_1,\ldots,t_M\},
\;
0<t_1<\cdots<t_M<1,
$
its density is
\[
\begin{aligned}
p(S)
&= 
R(0,t_1)\lambda^*(t_1)
R(t_1,t_2)\lambda^*(t_2)
\cdots
R(t_{M-1},t_M)\lambda^*(t_M)
R(t_M,1)\\
&= \exp
\left(
-\int_0^1 \lambda^*(t)\,dt
\right)
\prod_{i=1}^M
\lambda^*(t_i).
\end{aligned}
\]

We first derive the reverse jump rate induced by the denoiser.  
For $y\neq x$, Bayes' rule gives
\[
\begin{aligned}
&
\Pr(
X_{t-h}=y
\mid
X_t=x,X_0=x_0
)
\\
&=
\frac{
\Pr(
X_t=x
\mid
X_{t-h}=y,X_0=x_0
)
\Pr(
X_{t-h}=y
\mid
X_0=x_0
)
}{
\Pr(
X_t=x
\mid
X_0=x_0
)
}.
\end{aligned}
\]
By the Markov property,
\begin{align}
\Pr(
X_t=x
\mid
X_{t-h}=y,X_0=x_0
)
=
\Pr(
X_t=x
\mid
X_{t-h}=y
) =
\frac{\beta_t}{V}h+o(h),
\end{align}
while
\begin{align}
\Pr(X_{t-h}=y \mid X_0=x_0)
&=
\alpha_{t-h}\mathbf{1}\{y=x_0\}
+
\frac{1-\alpha_{t-h}}{V}
\\
&=
\left(\alpha_t-h\dot{\alpha}_t+o(h)\right)
\mathbf{1}\{y=x_0\}
+
\frac{
1-\alpha_t+h\dot{\alpha}_t+o(h)
}{V}
\\
&=
\Pr(X_t=y \mid X_0=x_0)
-
h\dot{\alpha}_t
\left(
\mathbf{1}\{y=x_0\}
-
\frac{1}{V}
\right)
+
o(h)
\end{align}
Hence
\[
\begin{aligned}
\Pr(
X_{t-h}=y
\mid
X_t=x,X_0=x_0
)=
\frac{\beta_t}{V}
\frac{
\Pr(X_t=y\mid X_0=x_0)
}{
\Pr(X_t=x\mid X_0=x_0)
}
h
+
o(h).
\end{aligned}
\]
Therefore, the exact reverse rate conditioned on $X_0=x_0$ is
\[
r_t^{\leftarrow}(x,y\mid x_0)
=
\frac{\beta_t}{V}
\frac{
\Pr(X_t=y\mid X_0=x_0)
}{
\Pr(X_t=x\mid X_0=x_0)
}.
\]

The reverse model does not observe $X_0$. Let $
f_\theta(x_0\mid x,t)
$
denote the denoiser prediction of the clean-state posterior given $X_t=x$.
Marginalizing over $x_0$ gives the model reverse rate:
\[
\begin{aligned}
r_t^\theta(x,y)
&=
\sum_{x_0\in\mathcal V}
f_\theta(x_0\mid x,t)
r_t^{\leftarrow}(x,y\mid x_0)
\\
&=
\frac{\beta_t}{V}
\sum_{x_0}
f_\theta(x_0\mid x,t)
\frac{
\Pr(X_t=y\mid X_0=x_0)
}{
\Pr(X_t=x\mid X_0=x_0)
}.
\end{aligned}
\]

The total reverse jump rate is therefore
$
\lambda_{f_\theta}(x,t)
:=
\sum_{y\neq x}
r_t^\theta(x,y).
$
Substituting the previous expression gives
\[
\lambda_{f_\theta}(x,t)
=
\frac{\beta_t}{V}
\sum_{x_0}
f_\theta(x_0\mid x,t)
\frac{
\sum_{y\neq x}
\Pr(X_t=y\mid X_0=x_0)
}{
\Pr(X_t=x\mid X_0=x_0)
}.
\]

There are two cases.

If $x_0=x$, then
\[
\Pr(X_t=x\mid X_0=x)
=
\alpha_t+\frac{1-\alpha_t}{V}
=
\frac{1+(V-1)\alpha_t}{V},
\]
while
\[
\sum_{y\neq x}
\Pr(X_t=y\mid X_0=x)
=
\frac{(V-1)(1-\alpha_t)}{V}.
\]
Thus
\[
\frac{
\sum_{y\neq x}\Pr(X_t=y\mid X_0=x)
}{
\Pr(X_t=x\mid X_0=x)
}
=
\frac{1-\alpha_t}
{\frac{1}{V-1}+\alpha_t}.
\]
If $x_0\neq x$, then
\[
\Pr(X_t=x\mid X_0=x_0)
=
\frac{1-\alpha_t}{V},
\]
and therefore
\[
\sum_{y\neq x}
\Pr(X_t=y\mid X_0=x_0)
=
\frac{V-1+\alpha_t}{V}.
\]
Hence
\[
\frac{
\sum_{y\neq x}\Pr(X_t=y\mid X_0=x_0)
}{
\Pr(X_t=x\mid X_0=x_0)
}
=
\frac{V-1+\alpha_t}{1-\alpha_t}.
\]

Finally,
\[ 
\lambda_{f_\theta}(x,t)
=
\frac{\beta_t}{V}
\left[
f_\theta(x\mid x,t)
\frac{1-\alpha_t}
{\frac{1}{V-1}+\alpha_t}
+
\bigl(1-f_\theta(x\mid x,t)\bigr)
\frac{V-1+\alpha_t}
{1-\alpha_t}
\right].
\]

To express the reverse process in increasing time, define
$
\tau := 1-t,
\;
Y_\tau := X_{1-\tau}.
$
The reverse generative process then evolves along
$
\tau:0\rightarrow1.
$
Define its state-conditioned jump rate by
$
\mu_\theta(x,\tau)
:=
\lambda_{f_\theta}(x,1-\tau).
$
Let
$
N_\tau
$ denote the schedule counting process of reverse transitions in $[0,\tau]$, and let
$
\mathcal F_{\tau^-}^{S}
$
denote the history states generated only by transition times strictly before $\tau$.

Conditioned on the current state,
\[
\Pr(
N_{\tau+h}-N_\tau=1
\mid
Y_\tau=x,\mathcal F_{\tau^-}^{S}
)
=
\mu_\theta(x,\tau)h+o(h).
\]
Marginalizing the latent state gives
\[
\begin{aligned}
\Pr(
N_{\tau+h}-N_\tau=1
\mid
\mathcal F_{\tau^-}^{S}
)
&=
\sum_x
\Pr(
N_{\tau+h}-N_\tau=1
\mid
Y_\tau=x,\mathcal F_{\tau^-}^{S}
)
q_\theta(
Y_\tau=x
\mid
\mathcal F_{\tau^-}^{S}
)
\\
&=
\left[
\sum_x
\mu_\theta(x,\tau)
q_\theta(
Y_\tau=x
\mid
\mathcal F_{\tau^-}^{S}
)
\right]
h
+
o(h).
\end{aligned}
\]
Therefore, the conditional intensity of the marginal schedule process is
\[
\bar\mu_\theta
(\tau\mid\mathcal F_{\tau^-}^{S})
=\lim_{h\to 0} \frac{\left[
\sum_x
\mu_\theta(x,\tau)
q_\theta(
Y_\tau=x
\mid
\mathcal F_{\tau^-}^{S}
)
\right]
h
+
o(h)}{h} = 
\mathbb E_{q_\theta}
\left[
\mu_\theta(Y_\tau,\tau)
\mid
\mathcal F_{\tau^-}^{S}
\right].
\]
Equivalently,
\[ 
\bar\mu_\theta
(\tau\mid\mathcal F_{\tau^-}^{S})
=
\mathbb E_{q_\theta}
\left[
\lambda_{f_\theta}
(Y_\tau,1-\tau)
\mid
\mathcal F_{\tau^-}^{S}
\right]. 
\]

Let
$
\Sigma
=
\{\tau_1,\ldots,\tau_M\},
\;
0<\tau_1<\cdots<\tau_M<1,
$
where
$\tau_i
=
1-t_{M+1-i}.$
Let $
R_\theta(a,b)
$
denote the conditional probability that no transition occurs on $(a,b]$
along the observed schedule history. By the definition of
$\bar\mu_\theta$,
\[
R_\theta(a,b+h)
=
R_\theta(a,b)
\left[
1-
\bar\mu_\theta
(b\mid\mathcal F_{b^-}^{S})
h
+
o(h)
\right].
\]
By definition of partial derivative, we have:
\[
\frac{\partial}{\partial b}R_\theta(a,b)
=
-
\bar\mu_\theta
(b\mid\mathcal F_{b^-}^{S})
R_\theta(a,b),
\]
and, since $R_\theta(a,a)=1$,
\[
R_\theta(a,b)
=
\exp
\left[
-\int_a^b
\bar\mu_\theta
(u\mid\mathcal F_{u^-}^{S})
\,du
\right].
\]

The density of observing transitions exactly at
$\tau_1,\ldots,\tau_M$ and no additional transitions is therefore
\[
\begin{aligned}
q_\theta(\Sigma)
&={}
R_\theta(0,\tau_1)
\bar\mu_\theta(
\tau_1\mid\mathcal F_{\tau_1^-}^{S})
\times
R_\theta(\tau_1,\tau_2)
\bar\mu_\theta(
\tau_2\mid\mathcal F_{\tau_2^-}^{S})
\times\cdots
R_\theta(\tau_M,1)\\
&=
\exp
\left[
-\int_0^1
\bar\mu_\theta
(\tau\mid\mathcal F_{\tau^-}^{S})
\,d\tau
\right]
\prod_{i=1}^{M}
\bar\mu_\theta
(\tau_i\mid\mathcal F_{\tau_i^-}^{S}).
\end{aligned}
\]

We now express the forward schedule in the same reverse-time coordinate.
Define
\[
\mu^*(\tau)
:=
\lambda^*(1-\tau)
=
\frac{V-1}{V}\beta_{1-\tau}.
\]
Since the transformation $t\mapsto1-t$ is bijective with unit absolute
Jacobian,
\[
D_{\mathrm{KL}}
\bigl(
p(S)\,\|\,q_\theta(S)
\bigr)
=
D_{\mathrm{KL}}
\bigl(
p(\Sigma)\,\|\,q_\theta(\Sigma)
\bigr).
\]
Moreover,
\[ 
p(\Sigma)
=
\exp
\left[
-\int_0^1 \mu^*(\tau)\,d\tau
\right]
\prod_{i=1}^{M}
\mu^*(\tau_i).
\]

We therefore have
\[
\begin{aligned}
\mathcal L_{\mathrm{when}}(\theta)
&=
\mathbb E_{\Sigma\sim p}
\left[
\log
\frac{p(\Sigma)}
{q_\theta(\Sigma)}
\right].
\end{aligned}
\]
Substituting the two schedule densities gives
\[
\begin{aligned}
\log
\frac{p(\Sigma)}
{q_\theta(\Sigma)}
=
\sum_{i=1}^M
\log
\frac{
\mu^*(\tau_i)
}{
\bar\mu_\theta(
\tau_i\mid\mathcal F_{\tau_i^-}^{S})
}
 +
\int_0^1
\left[
\bar\mu_\theta(
\tau\mid\mathcal F_{\tau^-}^{S})
-
\mu^*(\tau)
\right]d\tau.
\end{aligned}
\]

It remains to convert the sum over observed transition times into an
integral. For any predictable function $g(\tau)$,
\[
\sum_{i=1}^M g(\tau_i)
=
\int_0^1 g(\tau)\,dN_\tau.
\]
Under the forward schedule distribution $p$, the counting process has
intensity $\mu^*(\tau)$, and thus
\[
\mathbb E_p[
dN_\tau
\mid
\mathcal F_{\tau^-}^{S}
]
=
\mu^*(\tau)\,d\tau.
\]
Using the tower property,
\[
\begin{aligned}
\mathbb E_p
\left[
\int_0^1 g(\tau)\,dN_\tau
\right]
=
\mathbb E_p
\left[
\int_0^1
g(\tau)
\mathbb E_p[
dN_\tau
\mid
\mathcal F_{\tau^-}^{S}
]
\right]
=
\mathbb E_p
\int_0^1
\mu^*(\tau)g(\tau)\,d\tau.
\end{aligned}
\]
Taking
$
g(\tau)
=
\log
\frac{
\mu^*(\tau)
}{
\bar\mu_\theta(
\tau\mid\mathcal F_{\tau^-}^{S})
}
$
yields
\[
\begin{aligned}
\mathbb E_p
\left[
\sum_{i=1}^M
\log
\frac{
\mu^*(\tau_i)
}{
\bar\mu_\theta(
\tau_i\mid\mathcal F_{\tau_i^-}^{S})
}
\right]
=
\mathbb E_p
\int_0^1
\mu^*(\tau)
\log
\frac{
\mu^*(\tau)
}{
\bar\mu_\theta(
\tau\mid\mathcal F_{\tau^-}^{S})
}
\,d\tau.
\end{aligned}
\]

Combining the two terms gives the schedule-matching loss:
 
\begin{align}
\mathcal L_{\mathrm{when}}(\theta)
=
\mathbb E_{\Sigma\sim p}
\int_0^1
\Bigg[
\mu^*(\tau)
\log
\frac{
\mu^*(\tau)
}{
\bar\mu_\theta(
\tau\mid\mathcal F_{\tau^-}^{S})
} 
-
\mu^*(\tau)
+
\bar\mu_\theta(
\tau\mid\mathcal F_{\tau^-}^{S})
\Bigg]
d\tau.
\end{align}

\section{Proof of \cref{thm:sgd_convergence}}
For each sample when loss is
\begin{align*}
 {\ell}_{\mathrm{when}}(\theta)
=
\mu^*
\log
\frac{\mu^*}{\lambda_\theta}
-
\mu^*
+
\lambda_\theta.
\label{eq:sample_surrogate_loss}
\end{align*}
The gradient w.r.t $\lambda_{\theta}$ is
\[
\frac{\partial \ell}{\partial \lambda}
=
1-\frac{\mu^*}{\lambda}.
\]
Therefore, at the optimum $\theta^\star$, we have
\begin{align*}
\lambda_{\theta^\star}
=
\mu^*,\; 
1-\frac{\mu^*}{\lambda_{\theta^\star}}
=
0,
\end{align*}
The gradient w.r.t $ {\theta}$ is
\begin{align*}
\nabla_\theta
 {\ell}_{\mathrm{when}}(\theta)
=
-\mu^*
\nabla_\theta
\log \lambda_\theta
+
\nabla_\theta\lambda_\theta
=
\left(
1-\frac{\mu^*}{\lambda_\theta}
\right)
\nabla_\theta\lambda_\theta.
\label{eq:surrogate_gradient}
\end{align*}

The Hessian is,
\begin{align*}
\nabla_\theta^2
 {\ell}_{\mathrm{when}}(\theta)
={}&
\frac{\mu^*}{\lambda_\theta^2}
\nabla_\theta\lambda_\theta
\nabla_\theta\lambda_\theta^\top
+
\left(
1-\frac{\mu^*}{\lambda_\theta}
\right)
\nabla_\theta^2\lambda_\theta.
\end{align*}

Therefore
\begin{align*}
\nabla_\theta^2
 {\ell}_{\mathrm{when}}(\theta^\star)
=
\frac{1}{\mu^*}
\nabla_\theta\lambda_{\theta^\star}
\nabla_\theta\lambda_{\theta^\star}^{\top}.
\end{align*}

Taking expectation,
\begin{align*}
H_{\mathrm{when}}
&:=
\nabla^2
 {\mathcal L}_{\mathrm{when}}
(\theta^\star)
=
\mathbb E_{S,\tau,Y_\tau}
\left[
\frac{1}{\mu^*(\tau)}
\nabla_\theta
\lambda_{f_{\theta^\star}}
\nabla_\theta
\lambda_{f_{\theta^\star}}^\top
\right]. 
\end{align*}

For any $v\in\mathbb R^d$,
\begin{align*}
v^\top H_{\mathrm{when}}v
&=
\mathbb E
\left[
\frac{1}{\mu^*}
v^\top
\nabla\lambda_{\theta^\star}
\nabla\lambda_{\theta^\star}^\top
v
\right]
 =
\mathbb E
\left[
\frac{
\left(
v^\top
\nabla\lambda_{\theta^\star}
\right)^2
}{
\mu^*
}
\right]
 \ge0.
\end{align*}
Therefore,
$
H_{\mathrm{when}}\succeq0
$ is positive semidefinite.
Since
$\mathcal L_U
=
\mathcal L_M
+
 {\mathcal L}_{\mathrm{when}},
$
we have
$
H_U
=
H_M
+
H_{\mathrm{when}}.
$
Therefore,
$
H_U\succeq H_M.
$

We next characterize the stochastic-gradient covariance.
For each model $j\in\{M,U\}$, let $Z$ denote one stochastic
training sample and let
$
\ell_j(\theta;Z)
=
-\log q_\theta^j(Z)
$
denote the corresponding sample negative log-likelihood.
Define the sample gradient
$
g_j(\theta;Z)
=
\nabla_\theta \ell_j(\theta;Z),
$
and let
$
s_j(\theta;Z)
=
\nabla_\theta
\log q_\theta^j(Z)
$
denote the score function. Then
$
g_j(\theta;Z)
=
-s_j(\theta;Z).
$
For any $\theta$, the score has zero mean under the model
distribution:
\begin{align}
\mathbb E_{Z\sim q_\theta^j}
\left[
s_j(\theta;Z)
\right]
&=
\int
q_\theta^j(z)
\nabla_\theta
\log q_\theta^j(z)
\,dz
\\
&=
\int
\nabla_\theta
q_\theta^j(z)
\,dz
\\
&=
\nabla_\theta
\int
q_\theta^j(z)
\,dz
\\
&=
\nabla_\theta
1
\\
&=
0.
\label{eq:score_zero}
\end{align}

Differentiating \cref{eq:score_zero} w.r.t $\theta$
gives
\begin{align*}
0
&=
\nabla_\theta
\int
q_\theta^j(z)
s_j(\theta;z)
\,dz
\\
&=
\int
\nabla_\theta q_\theta^j(z)
s_j(\theta;z)^\top
\,dz
+
\int
q_\theta^j(z)
\nabla_\theta
s_j(\theta;z)^\top
\,dz.
\end{align*}

Using
$
\nabla_\theta q_\theta^j(z)
=
q_\theta^j(z)
s_j(\theta;z),
$
we have 
\begin{align*}
\mathbb E_{q_\theta^j}
\left[
s_j(\theta;Z)
s_j(\theta;Z)^\top
\right]
=
-
\mathbb E_{q_\theta^j}
\left[
\nabla_\theta^2
\log q_\theta^j(Z)
\right].
\end{align*}

At the optimum $\theta^\star$, the single-sample stochastic-gradient covariance is
\begin{align*}
\operatorname{Cov}
\left[
g_j(\theta^\star;Z)
\right]
=\mathbb{E}[g_j g_j^\top ] - \mathbb{E}[g_j]\mathbb{E}[g_j ]  ^\top =\mathbb{E}[g_j g_j^\top ] = 
H_j. 
\end{align*}

For a mini-batch of $B$ iid samples,
\begin{align*}
\widehat g_j
=
\frac{1}{B}
\sum_{b=1}^{B}
g_j^{(b)}.
\end{align*}
By independence,
\begin{align*}
\operatorname{Cov}
\left[
\widehat g_j
\right]
&=
\frac{1}{B^2}
\sum_{b=1}^{B}
\operatorname{Cov}
\left[
g_j^{(b)}
\right]
\\
&=
\frac{1}{B^2}
BH_j
\\
&=
\frac{1}{B}H_j.
\end{align*}
Since the loss of UDM and MDM has the form of negative log-likelihood, the local gradient-noise covariance $\Sigma_j$ is
$
\Sigma_j
=
\frac{1}{B}H_j,
\;
j\in\{M,U\}.
$
Since
$
H_U
=
H_M+H_{\mathrm{when}},
$
we have
\begin{align*}
\Sigma_U-\Sigma_M
&=
\frac{1}{B}
(H_U-H_M)
\\
&=
\frac{1}{B}
H_{\mathrm{when}}
\succeq0.
\end{align*}
Thus,
$
\Sigma_U\succeq\Sigma_M.
$

Consider either model $j\in\{M,U\}$ and define
$
e_k^j
=
\theta_k^j-\theta^\star.
$
Under the second order Taylor approximation,
\begin{align*}
\mathcal L_j(\theta)
=
\mathcal L_j^\star
+
\frac12
(\theta-\theta^\star)^\top
H_j
(\theta-\theta^\star), 
\end{align*}
and therefore
\begin{align*}
\nabla_\theta
\mathcal L_j(\theta_k^j)
=
H_je_k^j.
\end{align*}

Write the stochastic gradient as
\begin{align*}
\widehat g_j(\theta_k^j)
=
H_je_k^j+\xi_k^j,
\end{align*}
where
$
\mathbb E
\left[
\xi_k^j
\mid
e_k^j
\right]
=
0.
$
The SGD update is
\begin{align*}
\theta_{k+1}^j
=
\theta_k^j
-
\eta
\widehat g_j(\theta_k^j).
\end{align*}
Subtracting $\theta^\star$ from both sides yields
\begin{align}
e_{k+1}^j
=
(I-\eta H_j)e_k^j
-
\eta\xi_k^j.
\label{eq:error_dynamics}
\end{align}

Define
$
A_j
=
I-\eta H_j.
$
Since
$
H_U\succeq H_M,
$
we have
\begin{align*}
\lambda_{\max}(H_U)
\ge
\lambda_{\max}(H_M).
\end{align*}
Therefore, the common learning-rate condition
$
0<\eta<
\frac{2}{\lambda_{\max}(H_U)}
$
implies
$
0<\eta<
\frac{2}{\lambda_{\max}(H_j)}
$
for both $j\in\{M,U\}$.

Diagonalize
$
H_j
=
Q_j\Lambda_jQ_j^\top,
$
where
$
\Lambda_j
=
\operatorname{diag}
(
\lambda_{j,1},
\ldots,
\lambda_{j,d}
).
$
Then
\begin{align*}
A_j
=
Q_j
(I-\eta\Lambda_j)
Q_j^\top.
\end{align*}

For every eigenvalue $\lambda_{j,i}>0$,
$
0
<
\eta\lambda_{j,i}
<
2,
$
which implies
$
|1-\eta\lambda_{j,i}|<1.
$
Hence
$
\rho(A_j)<1,
$
and therefore
$
A_j^k\to0
\;
\text{as }
k\to\infty.
$

Define the second-moment matrix
$
P_k^j
=
\mathbb E
\left[
e_k^j(e_k^j)^\top
\right].
$
From \cref{eq:error_dynamics}, we have
\begin{align*}
e_{k+1}^j(e_{k+1}^j)^\top
={}&
A_je_k^j(e_k^j)^\top A_j^\top
 -
\eta
A_je_k^j(\xi_k^j)^\top
 -
\eta
\xi_k^j(e_k^j)^\top A_j^\top
 +
\eta^2
\xi_k^j(\xi_k^j)^\top.
\end{align*}

Taking expectation, the cross terms vanish because
\begin{align*}
\mathbb E
\left[
e_k^j(\xi_k^j)^\top
\right]
&=
\mathbb E
\left[
e_k^j
\mathbb E
\left[
(\xi_k^j)^\top
\mid
e_k^j
\right]
\right]
=
0.
\end{align*}

Thus
\begin{align*} 
P_{k+1}^j
=
A_jP_k^jA_j^\top
+
\eta^2\Sigma_j. 
\end{align*}
Iterating it gives
\begin{align*}
P_\infty^j
=
\eta^2
\sum_{r=0}^{\infty}
A_j^r
\Sigma_j
(A_j^\top)^r. 
\end{align*}

Using
$
\Sigma_j
=
\frac{1}{B}H_j
=
\frac{1}{B}
Q_j\Lambda_jQ_j^\top,
$
and
$
A_j
=
Q_j
(I-\eta\Lambda_j)
Q_j^\top,
$
we have
\begin{align*}
P_\infty^j
=
Q_j
\left[
\frac{\eta^2}{B}
\sum_{r=0}^{\infty}
(I-\eta\Lambda_j)^r
\Lambda_j
(I-\eta\Lambda_j)^r
\right]
Q_j^\top.
\end{align*}

The $i$-th diagonal entry in the eigenbasis is
\begin{align*}
p_{\infty,j,i}
&=
\frac{\eta^2\lambda_{j,i}}{B}
\sum_{r=0}^{\infty}
(1-\eta\lambda_{j,i})^{2r}
=
\frac{
\eta^2\lambda_{j,i}/B
}{
1-(1-\eta\lambda_{j,i})^2
}.
\end{align*}

Since
\begin{align*}
1-(1-\eta\lambda_{j,i})^2
&=
1-
\left(
1-2\eta\lambda_{j,i}
+\eta^2\lambda_{j,i}^2
\right)
\\
&=
2\eta\lambda_{j,i}
-\eta^2\lambda_{j,i}^2
\\
&=
\eta\lambda_{j,i}
(2-\eta\lambda_{j,i}),
\end{align*}
we have
\begin{align*}
{
p_{\infty,j,i}
=
\frac{\eta}
{B(2-\eta\lambda_{j,i})}.
}
\end{align*}

Equivalently,
\begin{align}
{
P_\infty^j
=
\frac{\eta}{B}
(2I-\eta H_j)^{-1}.
}
\label{eq:P_closed_form}
\end{align}

Under the second order Taylor approximation,
\begin{align*}
\mathcal L_j(\theta_k^j)
-
\mathcal L_j^\star
=
\frac12
(e_k^j)^\top
H_j
e_k^j.
\end{align*}
Therefore,
\begin{align*}
\mathbb E
\left[
\mathcal L_j(\theta_k^j)
-
\mathcal L_j^\star
\right]
&=
\frac12
\operatorname{Tr}
\left(
H_jP_k^j
\right).
\end{align*}

Taking $k\to\infty$ and using
\eqref{eq:P_closed_form},
\begin{align}
\mathbb E_{\mathrm{ss}}
\left[
\mathcal L_j-\mathcal L_j^\star
\right]
=
\frac12
\operatorname{Tr}
\left(
H_jP_\infty^j
\right)
=
{
\frac{\eta}{2B}
\operatorname{Tr}
\left[
H_j
(2I-\eta H_j)^{-1}
\right].
}
\label{eq:stationary_excess}
\end{align}


We have already established
$
H_U\succeq H_M.
$
Multiplying by the negative scalar $-\eta$ reverses the
Loewner order:
\begin{align*}
-\eta H_U
\preceq
-\eta H_M.
\end{align*}
Adding $2I$ gives
\begin{align}
2I-\eta H_U
\preceq
2I-\eta H_M.
\label{eq:X_order}
\end{align}

For positive-definite matrices, inversion reverses the Loewner
order. Therefore, \eqref{eq:X_order} implies
\begin{align}
{
(2I-\eta H_U)^{-1}
\succeq
(2I-\eta H_M)^{-1}.
}
\label{eq:inverse_order}
\end{align}

Define
$
F(H)
=
H(2I-\eta H)^{-1}.
$
Using
$
\eta H
=
2I-(2I-\eta H),
$
we have
\begin{align*}
\eta H(2I-\eta H)^{-1}
&=
2(2I-\eta H)^{-1}
-
I,
\end{align*}
and hence
\begin{align}
{
F(H)
=
\frac{2}{\eta}
(2I-\eta H)^{-1}
-
\frac{1}{\eta}I.
}
\label{eq:F_identity}
\end{align}

Combining \eqref{eq:inverse_order}
and \eqref{eq:F_identity},
\begin{align}
F(H_U)
\succeq
F(H_M).
\end{align}
Therefore,
\begin{align}
\operatorname{Tr}
\left[
H_U(2I-\eta H_U)^{-1}
\right]
\ge
\operatorname{Tr}
\left[
H_M(2I-\eta H_M)^{-1}
\right].
\end{align}

Using \eqref{eq:stationary_excess}, we have
\begin{align}
{
\mathbb E_{\mathrm{ss}}
\left[
\mathcal L_U-\mathcal L_U^\star
\right]
\ge
\mathbb E_{\mathrm{ss}}
\left[
\mathcal L_M-\mathcal L_M^\star
\right].
}
\end{align}

\section{Proof of \cref{thm:mdm_experss}}
We first show that replacing causal self-attention with block-causal
self-attention does not increase the circuit complexity of one forward
pass. \citet{li2024chain} show that a constant-depth decoder-only
Transformer with hidden dimension $\operatorname{poly}(n)$, numerical
precision $O(\log n)$, and zero exponent bits can be simulated by a
polynomial-size constant-depth threshold circuit. Their analysis
considers causal self-attention, where the attention weights at position
$i$ are
\[
s_i^{\mathrm{causal}}
=
\operatorname{softmax}_{0,s}
\left(
\langle q_i,k_1\rangle_{0,s},
\ldots,
\langle q_i,k_i\rangle_{0,s}
\right)
\Vert 0^{n-i}.
\]

Our MDM uses block-causal self-attention. For any $i\in[n]$, let
$r(i)\coloneqq
\min\{B(n)\lceil i/B(n)\rceil,n\}$ be the right endpoint of the block
containing position $i$. Then the attention weights at position $i$ are
\[
s_i^{\mathrm{blk}}
=
\operatorname{softmax}_{0,s}
\left(
\langle q_i,k_1\rangle_{0,s},
\ldots,
\langle q_i,k_{r(i)}\rangle_{0,s}
\right)
\Vert 0^{n-r(i)}.
\]
Thus the only difference is that the causal prefix endpoint $i$ is
replaced by $r(i)$. Since $r(i)\le n$, the same finite-precision
inner-product, softmax, and weighted-summation operations considered by
\citet{li2024chain} are applied to at most $n$ inputs of $O(\log n)$
bits. Thus the same $\mathsf{TC}^0$ simulation applies to block-causal
self-attention. In particular, one forward pass of the MDM can be
computed by a polynomial-size constant-depth threshold circuit.

We next adapt the parallel-enumeration argument of
\citet{li2024chain} to MDM decoding. Let $\Gamma_B$ be the set of
block-local decoding states needed to specify one decoding step, where
each state specifies the token/mask configuration of the active block
and its block-local generation order. Since $B=O(1)$ and
$|\mathcal V|=O(1)$, we have
$|\Gamma_B|\le (|\mathcal V|+1)^B B!=O(1)$.

A $T$-step MDM trajectory can therefore be represented by
$\tau=(\gamma_1,\ldots,\gamma_T)\in\Gamma_B^T$. For
$T\le c\log n$, the number of candidate trajectories is at most
$|\Gamma_B|^T\le |\Gamma_B|^{c\log n}
=n^{c\log |\Gamma_B|}=\operatorname{poly}(n)$.

We enumerate all candidate trajectories $\tau\in\Gamma_B^T$ in
parallel. Fix a candidate
$\tau=(\gamma_1,\ldots,\gamma_T)$. For every
$t\in\{0,\ldots,T\}$, let $X_\tau^{(t)}(x)$ denote the MDM state
obtained from $x$ after applying the first $t$ candidate actions
$\gamma_1,\ldots,\gamma_t$, where $X_\tau^{(0)}(x)=x$. For every
$t\in[T]$, define
\[
E_{\tau,t}(x)
\coloneqq
\mathbf{1}
\left[
F_{\theta,n}
\bigl(X_\tau^{(t-1)}(x),t\bigr)
=
\gamma_t
\right].
\]
Thus $E_{\tau,t}(x)=1$ if and only if the $t$-th candidate action is
consistent with the MDM decoding rule.

For a fixed $\tau$, the actions
$\gamma_1,\ldots,\gamma_T$ are fixed. Hence
$X_\tau^{(t-1)}(x)$ is determined by $x$ and
$(\gamma_1,\ldots,\gamma_{t-1})$, and does not require the preceding
model evaluations to be carried out sequentially. Thus all
$E_{\tau,t}$ can be evaluated in parallel.

By the above single-forward-pass argument,
$F_{\theta,n}(X_\tau^{(t-1)}(x),t)$ can be computed by a
polynomial-size constant-depth threshold circuit. Since $\gamma_t$
has constant size, testing equality with $\gamma_t$ also takes
constant depth. Thus $E_{\tau,t}\in\mathsf{TC}^0$ for every
$\tau\in\Gamma_B^T$ and $t\in[T]$.

For each candidate trajectory $\tau$, define
$E_\tau(x)\coloneqq\bigwedge_{t=1}^T E_{\tau,t}(x)$. Since
$\mathsf{TC}^0$ allows unbounded-fan-in AND gates,
$E_\tau\in\mathsf{TC}^0$. Under a deterministic decoding rule with
fixed tie-breaking, for every input $x$ there is a unique trajectory
$\tau^\star$ such that $E_{\tau^\star}(x)=1$.

Let $G_\tau(x)$ denote the final output corresponding to candidate
trajectory $\tau$. For every output bit $j$, define
\[
C_{n,j}(x)
\coloneqq
\bigvee_{\tau\in\Gamma_B^T}
\left(
E_\tau(x)\wedge G_{\tau,j}(x)
\right).
\]
Since there are only $\operatorname{poly}(n)$ candidate trajectories
and $\mathsf{TC}^0$ allows unbounded-fan-in OR gates,
$C_{n,j}\in\mathsf{TC}^0$. Moreover, since $\tau^\star$ is the unique
consistent trajectory, $C_{n,j}(x)=G_{\tau^\star,j}(x)$. Thus $C_n$
computes exactly the output of the original $T$-step MDM. This
completes the proof.
\section{Proof of \cref{thm:udm_experss}}

Let
$
f=\{f_n:\{0,1\}^{n}\to\{0,1\}\}_{n\geq 1}
\in\mathsf{NC}^{1}
$
be arbitrary.
By Barrington's theorem, for every $n$, there exists a width-$5$
permutation branching program
$
\mathcal{B}_n=(I_1,\ldots,I_{T(n)})
$
of length
$
T(n)=\operatorname{poly}(n)
$
that computes $f_n$.
Each instruction is of the form
$
I_t=(i_t,\pi_t^0,\pi_t^1),
$
where
$
i_t\in[n]
$
and
$
\pi_t^0,\pi_t^1\in S_5
$.
For an input
$
x=(x_1,\ldots,x_n)\in\{0,1\}^n,
$
define
$
\rho_t(x)\triangleq \pi_t^{x_{i_t}},
$
where
$
\pi_t^{x_{i_t}}=\pi_t^0
$
if
$
x_{i_t}=0
$
and
$
\pi_t^{x_{i_t}}=\pi_t^1
$
.
The branching program can be chosen such that, for some fixed
$
\alpha\in S_5,
$
\[
f_n(x)=1
\quad\Longleftrightarrow\quad
\rho_1(x)\circ\cdots\circ\rho_{T(n)}(x)=\alpha.
\]

We construct a constant-depth UDM that simulates one instruction of
$
\mathcal{B}_n
$
at each decoding step.
Define
$
\sigma_0\triangleq e
$
and
$
\sigma_t\triangleq\sigma_{t-1}\circ\rho_t(x)
$
for
$
t\in[T(n)],
$
where $e$ is the identity element of $S_5$.
It therefore suffices to maintain
$
\sigma_t
$
in a rewritable workspace.

Let the vocabulary contain the input symbols
$
\{0,1\}
$
and one workspace symbol
$
\langle \sigma\rangle
$
for each
$
\sigma\in S_5
$.
For every
$
\sigma\in S_5,
$
let
$
e_\sigma\in\{0,1\}^{120}
$
denote its one-hot encoding.
Since
$
|S_5|=120,
$
all representations associated with permutations have constant
dimension.

Let
$
k\triangleq \lceil \log_2(n+2)\rceil
$
and use
$
\operatorname{sbin}_k(j)
$
in \citep{li2024chain} to denote the signed binary representation of an address $j$.
We reserve address
$
n+1
$
for the workspace.
Let $C$ be a sufficiently large constant and take the hidden dimension
to be
$
d'=2k+C
$.
We partition the coordinates of each hidden representation into blocks
storing, respectively, the input value, the position address, the query
address, the current permutation state, the two permutations in the
current instruction, the retrieved input bit, the next permutation
state, a constant $1$, and auxiliary coordinates.

For every input position
$
j\in[n],
$
we choose the token and positional embeddings such that
\[
h_j^0
=
\left[
x_j,\,
\operatorname{sbin}_k(j),\,
0,\,
0,\,
0,\,
0,\,
0,\,
0,\,
1,\,
0
\right]^\top,
\]
where each zero denotes the zero vector of the corresponding block.

Suppose that before decoding step $t$, the workspace token is
$
\langle\sigma_{t-1}\rangle
$.
Its token embedding places
$
e_{\sigma_{t-1}}
$
in the current-state block.
The externally supplied timestep embedding at step $t$ encodes the
instruction
$
I_t=(i_t,\pi_t^0,\pi_t^1)
$
by placing
$
\operatorname{sbin}_k(i_t),
e_{\pi_t^0},
$
and
$
e_{\pi_t^1}
$
in the query-address and instruction blocks.
Hence the initial hidden representation at the workspace position,
denoted by $*$, is
\[
h_*^0
=
\left[
0,\,
\operatorname{sbin}_k(n+1),\,
\operatorname{sbin}_k(i_t),\,
e_{\sigma_{t-1}},\,
e_{\pi_t^0},\,
e_{\pi_t^1},\,
0,\,
0,\,
1,\,
0
\right]^\top.
\]

We next construct the attention layer.
Following the exact-addressing construction of
\citet{li2024chain},
we choose fixed linear projections
$
W_Q,W_K,W_V
$
such that $W_Q$ reads the query-address and constant-$1$ blocks,
$W_K$ reads the position-address and constant-$1$ blocks, and $W_V$
reads the input-value coordinate.
In particular,
\[
q_*
\triangleq
W_Q h_*^0
=
\operatorname{sbin}_k(i_t)
\frown
\mathbf{1}_k,
\]
while, for every
$
j\in[n],
$
\[
k_j
\triangleq
W_K h_j^0
=
B_s
\left(
\operatorname{sbin}_k(j)
\frown
(-\mathbf{1}_k)
\right),
\qquad
v_j
\triangleq
W_V h_j^0
=
x_j e_b,
\]
where $e_b$ denotes a designated retrieved-bit coordinate.
At the workspace position,
\[
k_*
=
W_K h_*^0
=
B_s
\left(
\operatorname{sbin}_k(n+1)
\frown
(-\mathbf{1}_k)
\right),
\qquad
v_*=0.
\]
These projections are realizable by fixed linear maps since they only
select, rescale, duplicate, and relocate coordinates of the hidden
representation.

By the exact-addressing lemma of
\citet{li2024chain},
\[
\left[
\exp
\left(
\left\langle q_*,k_j\right\rangle_s
\right)
\right]_s
=
\mathbf{1}[j=i_t]
\]
for every
$
j\in[n].
$
Moreover,
$
i_t\neq n+1
$,
so the workspace position itself receives zero attention weight.
Hence the attention vector at the workspace position selects exactly
position
$
i_t
$,
and its attention output is
\[
W_O
\sum_j
a_jv_j
=
x_{i_t}e_b,
\]
where $W_O$ places the result in the retrieved-bit coordinate.
After the residual connection, the workspace representation
$
h_*^{0.5}
$
therefore contains
$
e_{\sigma_{t-1}},
e_{\pi_t^0},
e_{\pi_t^1},
$
and
$
x_{i_t}
$.

We now construct the feed-forward layer implementing the
branching-program transition.
For
$
\sigma,\mu,\nu\in S_5
$
and
$
c\in\{0,1\},
$
let
$
u_\sigma,
v_\mu^0,
$
and
$
v_\nu^1
$
denote the corresponding coordinates of the current-state,
$\pi^0$, and $\pi^1$ one-hot blocks of
$
h_*^{0.5}
$.
Let $b$ denote the retrieved-bit coordinate and define
$
r_0(b)\triangleq 1-b
$
and
$
r_1(b)\triangleq b
$.
For every
$
(\sigma,\mu,\nu,c)\in S_5^3\times\{0,1\},
$
define
\[
z_{\sigma,\mu,\nu,c}
\triangleq
\operatorname{ReLU}
\left(
u_\sigma
+
v_\mu^0
+
v_\nu^1
+
r_c(b)
-3
\right).
\]
Since all four quantities inside the sum are binary,
\[
z_{\sigma,\mu,\nu,c}
=
\mathbf{1}
\left[
\sigma=\sigma_{t-1},
\,
\mu=\pi_t^0,
\,
\nu=\pi_t^1,
\,
c=x_{i_t}
\right].
\]
Thus exactly one such hidden unit is equal to $1$.

For
$
c\in\{0,1\},
$
define
\[
P_c(\mu,\nu)
\triangleq
\begin{cases}
\mu, & c=0,\\
\nu, & c=1.
\end{cases}
\]
For every
$
\tau\in S_5,
$
define the corresponding next-state coordinate by
\[
y_\tau
\triangleq
\sum_{\substack{
\sigma,\mu,\nu\in S_5,\,
c\in\{0,1\}:\\
\sigma\circ P_c(\mu,\nu)=\tau
}}
z_{\sigma,\mu,\nu,c}.
\]
Since exactly one
$
z_{\sigma,\mu,\nu,c}
$
is nonzero,
\[
y_\tau
=
\mathbf{1}
\left[
\tau
=
\sigma_{t-1}
\circ
\pi_t^{x_{i_t}}
\right].
\]
Therefore the next-state block is exactly
\[
y
=
e_{
\sigma_{t-1}
\circ
\pi_t^{x_{i_t}}
}
=
e_{\sigma_t}.
\]
The above mapping is realizable by a two-layer ReLU network of constant
width, since
$
2|S_5|^3
$
is independent of $n$.
Moreover, all weights appearing in this construction belong to a fixed
finite set and are therefore exactly representable under
$
O(\log n)
$
bits of fixed-point precision.

We choose the output projection so that, for every
$
\tau\in S_5,
$
the logit of the workspace symbol
$
\langle\tau\rangle
$
is the coordinate
$
y_\tau
$.
Hence
$
\langle\sigma_t\rangle
$
has logit $1$, whereas every other workspace symbol has logit $0$.
Under deterministic argmax decoding, the model therefore outputs
\[
w_t
=
\langle\sigma_t\rangle,
\]
which rewrites the previous workspace token using the same workspace
location.

We now verify the construction inductively.
For every
$
t\in[T(n)],
$
the workspace position satisfies
\[
h_*^0
=
\left[
0,\,
\operatorname{sbin}_k(n+1),\,
\operatorname{sbin}_k(i_t),\,
e_{\sigma_{t-1}},\,
e_{\pi_t^0},\,
e_{\pi_t^1},\,
0,\,
0,\,
1,\,
0
\right]^\top,
\]
and, after the attention layer,
\[
h_*^{0.5}
=
\left[
0,\,
\operatorname{sbin}_k(n+1),\,
\operatorname{sbin}_k(i_t),\,
e_{\sigma_{t-1}},\,
e_{\pi_t^0},\,
e_{\pi_t^1},\,
x_{i_t},\,
0,\,
1,\,
0
\right]^\top,
\]
while the next-state block after the feed-forward layer is
$
e_{\sigma_t}
$.

We claim that
\[
\sigma_t
=
\rho_1(x)
\circ\cdots\circ
\rho_t(x)
\]
for every
$
0\leq t\leq T(n)
$.
The claim is immediate for
$
t=0
$
since
$
\sigma_0=e
$.
Suppose it holds at step $t-1$.
At step $t$, the timestep embedding supplies
$
(i_t,\pi_t^0,\pi_t^1)
$.
By the attention construction above, the retrieved-bit coordinate is
exactly
$
x_{i_t}
$.
The feed-forward layer therefore computes
\[
\sigma_t
=
\sigma_{t-1}
\circ
\pi_t^{x_{i_t}}
=
\sigma_{t-1}
\circ
\rho_t(x).
\]
Using the induction hypothesis,
\[
\sigma_t
=
\rho_1(x)
\circ\cdots\circ
\rho_{t-1}(x)
\circ
\rho_t(x),
\]
which establishes the claim.

After
$
T(n)
$
decoding steps,
\[
\sigma_{T(n)}
=
\rho_1(x)
\circ\cdots\circ
\rho_{T(n)}(x).
\]
By the defining property of the branching program,
$
f_n(x)=1
$
if and only if
$
\sigma_{T(n)}=\alpha
$.
Since
$
S_5
$
has constant size, this final equality test is realizable by a
constant-size output layer.

Finally,
$
T(n)=\operatorname{poly}(n)
$
by Barrington's theorem.
The signed-binary address blocks require
$
k=O(\log n)
$
coordinates, whereas all permutation-related blocks have constant
dimension.
Thus the construction in fact uses
$
d'(n)=O(\log n)
$
hidden dimension, and hence certainly
$
d'(n)=\operatorname{poly}(n)
$.
It uses
$
O(\log n)
$
bits of fixed-point precision and no exponent bits.
Only one workspace token is preserved and rewritten between decoding
steps, so
$
W(n)=O(1)
$.
Therefore
\[
f
\in
\T_{\mathrm{UDM}}
\left[
\operatorname{poly}(n),
\operatorname{poly}(n),
O(\log n),
0,
O(1)
\right].
\]
Since
$
f\in\mathsf{NC}^{1}
$
was arbitrary, the result follows.

\end{document}